\theoremstyle{definition}
\newtheorem{definition}{Definition}[section]
\newtheorem{proposition}{Proposition}[section]
\newtheorem{theorem}{Theorem}[section]
\newtheorem{lemma}{Lemma}[section]
\newtheorem{corollary}{Corollary}[section]
\newtheorem{prop}{Proposition}[section]
\newtheorem{remark}{Remark}[section]
\def\tE{{\text{E}}}
\newcommand{\one}{\mathbf{1}}
\newcommand{\h}{\mathbf{h}}
\newcommand{\mle}{\text{mle}}
\newcommand{\hR}{\widehat{R}}
\newcommand{\hr}{\hat{r}}
\newcommand{\hf}{\widehat{f}}
\newcommand{\hDelta}{\widehat{\Delta}}
\newcommand{\tT}{\tilde{T}}
\newcommand{\obs}{\text{obs}}
\newcommand{\KL}{\text{KL}}
\title{
\textbf{Optimal Estimation of Change\\ in a Population of Parameters}}
\author{Ramya Korlakai Vinayak, Weihao Kong, Sham M. Kakade\\ \\
Paul. G. Allen School of Computer Science \& Engineering, University of Washington\\
\tt{\{ramya, whkong, sham\}@cs.washington.edu}}
\begin{document}
\date{}
\maketitle

\begin{abstract}
Paired estimation of change in parameters of interest over a population plays a central role in several application domains including those in the social sciences, epidemiology, medicine and biology. In these domains, the size of the population under study is often very large, however, the number of observations available per individual in the population is very small (\emph{sparse observations}) which makes the problem challenging. Consider the setting with $N$ independent individuals, each with unknown parameters $(p_i, q_i)$ drawn from some unknown distribution on $[0, 1]^2$. We observe $X_i \sim \text{Bin}(t, p_i)$ before an event and $Y_i \sim \text{Bin}(t, q_i)$ after the event. Provided these paired observations, $\{(X_i, Y_i) \}_{i=1}^N$, our goal is to accurately estimate the \emph{distribution of the change in parameters}, $\delta_i := q_i - p_i$, over the population and properties of interest like the \emph{$\ell_1$-magnitude of the change} with sparse observations ($t\ll N$).
We provide \emph{information theoretic lower bounds} on the error in estimating the distribution of change and the $\ell_1$-magnitude of change. Furthermore, we show that the following two step procedure achieves the optimal error bounds: first, estimate the full joint distribution of the paired parameters using the maximum likelihood estimator (MLE) and then estimate the distribution of change and the $\ell_1$-magnitude of change using the joint MLE. 
Notably, and perhaps surprisingly, these error bounds are of the same order as the minimax optimal error bounds for learning the \emph{full} joint distribution itself (in Wasserstein-1 distance);
in other words, estimating the magnitude of the change of parameters over the population is, in a minimax sense, as difficult as estimating the full joint distribution itself.
\end{abstract}

\section{Introduction}\label{sec:intro}
The problem of estimating change in the parameters of interest over a population plays a key role in social sciences, epidemiology, medicine, biology, and sports analytics~\cite{wilcoxon1945individual, wacholder1982paired, moskowitz2006comparing, roozenbeek2011added, brown2008season}. Consider an example where we want to assess the impact of a policy of administering free flu vaccines on the health of a population. Suppose for a large random sample of the population we have observations of whether or not an individual caught the flu in each year for $5$ years before and $5$ years after the policy is introduced. While it is hard to accurately estimate the change in the probability of contracting flu per individual with only $5$ observations before and after, assessing the \emph{change over the population} is crucial for the public health policy makers. 
Learning the distribution and the magnitude of change is extremely useful for downstream analysis of testing and estimating properties of the distribution of change, e.g., was there a change? How much did the parameters change? Was the overall change positive?

\par One approach to estimating the change is to first learn the distributions before and after the event of interest and then estimate the change between these learned distributions. However, this approach is not sufficient in a \emph{paired observation setting}, where observations are available before and after for each individual. For example, let half the population have parameter $0.25$ and the other half $0.75$ before an event. Suppose after the event the parameter increases by $0.5$ for those with parameter $0.25$ (before) and  decreases by $0.5$ for those with parameter $0.75$ (before). Clearly, the distribution of change is supported on $\{-0.5, 0.5\}$ with half the mass on each point. However, the distribution of parameters before looks the same as that of after. In contrast, \emph{paired estimation of the distribution of change} can overcome this to recover the distribution of change.
\begin{figure*}[t!]
\centering
  \includegraphics[width=\textwidth]{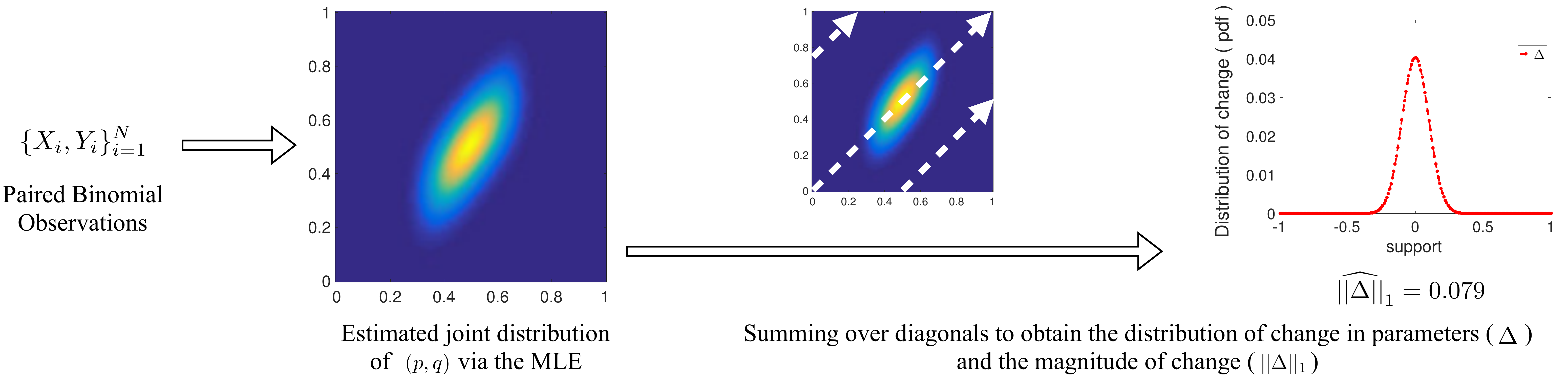}
\caption{\small{Illustration of optimal estimators for the distribution of change and the $\ell_1$-magnitude of change from paired Binomial observations obtained by using the estimated joint distribution of the parameters $(p, q)$.}}\label{fig:IlustrationdistOfChange}
\label{fig:test}
\end{figure*}
Classically, this problem is studied using \textit{paired difference test}, where the goal is to test whether the population mean or median changed after the event (e.g. introduction of a policy). \textit{Paired t-test} is typically applied when the change is normally distributed. For other cases, e.g., Binomial observations as in our setting, \emph{sign test} and \textit{Wilcoxon's signed-rank test}~\cite{wilcoxon1945individual, lehmann2006testing} are often applied. However, these tests can only answer the question of whether the population \textit{mean} or \emph{median} changed, but not how the changes distribute.
Furthermore, when a classical paired test only detects change in the mean or the median, it does not capture the \emph{effect size}, a measure of magnitude of the change~\cite{sullivan2012using}.

\paragraph{Our Contributions:} In this work, we consider the problem of estimating the distribution and the $l_1$-magnitude of change in the paired Binomial observation setting and address the following question:
\emph{What is the sample complexity for estimating the distribution and magnitude of change?}
\begin{itemize} [leftmargin=*]
    \item 
    We provide \emph{information theoretic lower bounds} on the error in estimating the distribution of change in Wasserstein-1 distance (or the earth mover's distance) and the absolute error in estimating the $\ell_1$-magnitude of the change with sparse observations. 
    \item 
    We also show that estimators derived by using the maximum likelihood estimator (MLE) of the joint distribution of the paired observations achieve the minimax optimal error bounds in the sparse observation regime (Figure~\ref{fig:IlustrationdistOfChange}). 
\end{itemize}
Perhaps surprisingly, our analysis shows that the minimax error bounds on estimating the distribution and the magnitude of the change, which is a functional of the distribution of change, is of the same order as the minimax error bound on estimating the full joint distribution of the paired observations over $[0, 1]^2$.
That is, in a minimax sense, estimating the distribution of change in the parameters over the population and it's $\ell_1$-magnitude (a functional of the distribution of change)  is as difficult as estimating the \emph{full} joint distribution of the parameters itself.
We also evaluate the empirical performance of our estimators through extensive simulations and demonstrate the use of $\ell_1$-magnitude of change for hypothesis testing.

\subsection{Problem Setup and Summary of Results}\label{sec:summaryofresults}
We can model individuals with parameters $\in [0, 1]$ in the Binomial setting as coins with biases equal to the parameters. Let $R^\star$ on $[0, 1]^2$ denote the true joint distribution of biases before and after the event. Let $N$ be the number of coins. For each coin $i$, the before and after bias tuple $(p_i, q_i)$ is drawn independently from  $R^\star$, i.e., $(p_i, q_i) \sim R^\star$. We get to observe $t$ tosses per coin before and $t$ tosses after an event. Let $\{ (X_i, Y_i) \}_{i=1}^N$ be the set of observations with $X_i \sim \text{Bin}(t, p_i)$ and $Y_i \sim \text{Bin}(t, q_i)$. $\Delta^\star$ supported on $[-1, 1]$, denotes the distribution of change in the biases after the event compared to the biases before (defined formally later, see Equation~\eqref{eqn:changedistributiontrue}). $||\Delta^\star||_1 := \int_{z=-1}^1 |z| d\Delta^\star(z)$ denotes the $l_1$-magnitude of the change. Our goal is to estimate $\Delta^\star$ and its $l_1$-magnitude $||\Delta^\star||_1$ given the paired Binomial observations $\{ (X_i, Y_i) \}_{i=1}^N$. We establish information theoretic lower bounds on how well we can estimate them over the population when $t \ll N$ (\emph{sparse observations}).
Furthermore, we construct estimators for the distribution and the magnitude of change in the biases over the population, denoted by $\hDelta$ and $\widehat{||\Delta||}_1$ respectively, which achieve the information theoretic lower bounds and therefore are minimax optimal in the sparse regime. These estimators are simple and easily to compute, derived using the MLE of the joint distribution of $(p, q)$. Figure~\ref{fig:IlustrationdistOfChange} shows an illustration of integrating over the diagonal and sub-diagonals of the joint distribution over $[0, 1]^2$ to obtain the distribution of change supported on $[-1, 1]$. These estimators are defined formally later in Definitions~\eqref{def:distofChange} and~\eqref{def:MagofChange}. 
We bound the error in the estimation of the distribution of change (under Wasserstein-1 distance, Definition~\ref{def:W1def}) and the absolute error in estimating the $l_1$-magnitude of change and in conjunction with the lower bounds show that,
\begin{theorem}(Informal)
\begin{itemize}[leftmargin=*, topsep=-2pt, itemsep=-2pt]
    \item 
    When $t < c\ \log{N}:\ W_1( \Delta^\star , \widehat{\Delta} ) = \Theta_{\alpha}\left( \frac{1}{t} \right);\ \left| ||\Delta^\star||_1 - \widehat{||\Delta||_1} \right|= \Theta_{\alpha}\left( \frac{1}{t} \right).\footnote{$\Theta_\alpha(.)$ hides $\log{(1/\alpha)}$ in the bound for it to hold with probability at least $1-2\alpha$.}
$
\item When $ t \in \left[ \Omega(\log{N}), \mathcal{O}\left({N^{1/8 - \epsilon}} \right) \right]:\ 
     W_1( \Delta^\star , \widehat{\Delta} )= \Theta_\alpha\left(\frac{1}{\sqrt{t \log{N}}} \right);\ \left| ||\Delta^\star||_1 - \widehat{||\Delta||_1} \right|= \Theta_\alpha\left(\frac{1}{\sqrt{t \log{N}}} \right).
$
\end{itemize}
\end{theorem}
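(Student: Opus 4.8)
The plan is to establish the matching upper and lower bounds implicit in the $\Theta$ notation, organized around a single quantity $K := \min\{t,\, c\sqrt{t\log N}\}$, the number of moments of the parameter distribution that are \emph{statistically recoverable} from $N$ coins with $t$ tosses each. Observe that $K \asymp t$ exactly when $t \lesssim \log N$ and $K \asymp \sqrt{t\log N}$ in the intermediate regime, so both displayed rates read uniformly as $\Theta_\alpha(1/K)$. For the upper bound I would first control $W_1(\hR, R^\star)$ for the joint MLE and then transfer this to $\hDelta$ and $\widehat{||\Delta||}_1$ by a Lipschitz-pushforward argument; for the lower bound I would collapse the paired problem to a one-dimensional mixture-learning instance and invoke a moment-matching two-point construction.

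\textbf{Upper bound.} The joint MLE $\hR$ (nearly) maximizes the likelihood of the $N$ paired counts, so its induced law on $(X,Y)$ matches the empirical distribution of $\{(X_i,Y_i)\}$. Since $\Pr[X=x,Y=y]=\binom tx\binom ty\,\mE_{R}[p^x(1-p)^{t-x}q^y(1-q)^{t-y}]$ is a fixed linear combination of the joint moments $\mE_{R}[p^jq^k]$ with $j,k\le t$, this forces $\hR$ to agree with $R^\star$ on the joint moments that concentrate, namely those of total degree $\lesssim K$. A two-dimensional moment-matching (Wasserstein) estimate then gives $W_1(\hR, R^\star)=O_\alpha(1/K)$, the $\log(1/\alpha)$ arising from concentration of the empirical moments over the $N$ independent coins. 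The transfer is immediate: $\Delta^\star$ and $\hDelta$ are the pushforwards of $R^\star$ and $\hR$ under the $\sqrt2$-Lipschitz map $(p,q)\mapsto q-p$, hence $W_1(\Delta^\star,\hDelta)\le \sqrt2\,W_1(R^\star,\hR)$; and since $z\mapsto|z|$ is $1$-Lipschitz, Kantorovich--Rubinstein duality yields $\big|\,||\Delta^\star||_1-\widehat{||\Delta||}_1\,\big|\le W_1(\Delta^\star,\hDelta)$. Chaining gives the $O_\alpha(1/K)$ bounds for both targets.

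\textbf{Lower bound.} The content here is that the ``simpler'' target $\Delta^\star$ (and the functional $||\Delta^\star||_1$) is no easier than $R^\star$. I would restrict the adversary to instances with $p_i\equiv 1/2$ known, so that $q_i=1/2+\delta_i$, the count $X_i$ is pure noise, and $Y_i\sim\text{Bin}(t,1/2+\delta_i)$; on this subfamily $\Delta^\star$ is exactly the law of $\delta_i$ and the task reduces to learning a distribution on $[0,1]$ from $\text{Bin}(t,\cdot)$ observations. I would then construct two priors on $\delta$ whose first $K$ moments coincide but which differ in $W_1$ (resp.\ in $\int|z|\,d\Delta$) by $\Omega(1/K)$: matched moments up to order $K$ render the laws of $Y_1^{\otimes N}$ nearly indistinguishable, because the $\text{Bin}(t,\cdot)$ mixture PMF depends only on moments $\le t$ and the contribution of moments of order in $(K,t]$ sits below the sampling noise of $N$ coins. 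Le Cam's two-point method then converts this indistinguishability into the claimed $\Omega_\alpha(1/K)$ lower bounds.

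\textbf{Main obstacle.} The delicate step is the intermediate regime of the lower bound: pinning down that exactly $K\asymp\sqrt{t\log N}$ moments can be matched while keeping the total-variation distance between the two $N$-fold observation laws bounded away from $1$. This requires a careful chi-square (or $\KL$) computation between the two Binomial-mixture product measures, together with a dual polynomial-approximation certificate (via Chebyshev polynomials on $[0,1]$, and Bernstein's degree-$K$ approximation of $|z|$ for the magnitude functional) guaranteeing that a moment-matched pair realizing the $\Omega(1/K)$ gap actually exists. The hypothesis $t\le N^{1/8-\epsilon}$ is precisely what keeps this chi-square bounded and hence the construction valid, and the same polynomial-approximation machinery must be used to track the constant in $K$ so that the upper and lower bounds agree up to constants and yield $\Theta$.
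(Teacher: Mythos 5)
Your overall architecture matches the paper's: estimate the joint distribution by the MLE, transfer to $\hDelta$ and $\widehat{||\Delta||}_1$ via the observation that $f(y-x)$ is Lipschitz in $(x,y)$ (so the two errors are dominated by $W_1(R^\star,\hR_\mle)$), and prove the lower bound by a Le Cam two-point argument with $p\equiv 1/2$ and two moment-matched priors on $q$ — the paper literally takes $P=\delta_{1/2}\times P^*$ and $Q=\delta_{1/2}\times Q^*$. However, both halves of your sketch omit the step that carries the actual difficulty. In the upper bound, the claim that the MLE ``agrees with $R^\star$ on the joint moments that concentrate'' and that a ``moment-matching Wasserstein estimate then gives $O_\alpha(1/K)$'' is precisely the statement requiring proof, and as phrased it fails: expressing a Lipschitz function in the monomial basis produces coefficients of size exponential in the degree, so approximate moment matching yields nothing without a basis in which the coefficients are controlled. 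The paper never passes through moments. It shows, via KL-optimality of the MLE, Pinsker's inequality, and a concentration bound for the empirical KL, that the expected fingerprint of $\hR_\mle$ is $\ell_1$-close to that of $R^\star$; it then approximates each Lipschitz $f$ by a polynomial written in the tensor Bernstein basis $B_u^t(x)B_v^t(y)$ — whose expectations are exactly the fingerprints — and proves the coefficient bound $|\kappa_{uv}|\le 4(k+2)(t+1)^2 e^{2k^2/t}$ for a degree-$k$ approximation with uniform error $O(1/k)$. That $e^{2k^2/t}$ factor is the entire reason the achievable degree is $k\asymp\sqrt{t\log N}$ in the intermediate regime; indeed the paper's own remark on moment-based estimation notes that the plain method of moments fails for $t>c\log N$ because of high-moment variance. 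Your $K=\min\{t,\,c\sqrt{t\log N}\}$ is the right answer, but your proposal supplies no mechanism that produces it.

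In the lower bound, your reduction to one dimension is exactly right, but for $t=\Omega(\log N)$ your plan — match $K\asymp\sqrt{t\log N}$ moments of two priors on the full interval $[0,1]$ and argue that higher moments ``sit below the sampling noise'' — does not deliver the required indistinguishability: on $[0,1]$ the binomial mixture law genuinely depends on moments up to degree $t$, and no total-variation bound of the needed strength is available for priors matching only $\sqrt{t\log N}$ moments on the whole interval. The paper instead \emph{localizes}: it matches $L=e^4\log N$ moments of two priors supported on an interval of width $2\sqrt{\log N/t}$ around $1/2$; the short support is what makes the single-coin total variation at most $2\sqrt{t}/N^{e^4}$ (Proposition 4.4 of \cite{vinayak2019maximum}), which survives the $N$-fold product, while the functional gap $\Omega(1/\sqrt{t\log N})$ comes from rescaling the Chebyshev best-approximation construction of \cite{jiao2018minimax} to that interval. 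You correctly flag this as the delicate step, but the missing idea is the localization, not a chi-square computation on a full-interval construction. Finally, the condition $t\le N^{1/8-\epsilon}$ belongs to the upper bound (it keeps $(t+1)^2\lesssim\sqrt{N}$ and the fingerprint error terms subdominant); the lower bound needs only the much weaker $t\le N^{2(e^4-1)}/36$.
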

\vspace{-5pt}
Note that $t < c\ \log{N}$ implies that the size of the population, $N \gtrsim \exp{(t)}$. When $ t \in \left[ \Omega(\log{N}), \mathcal{O}\left({N^{1/8 - \epsilon}} \right) \right]$, then $ \text{poly}(t) \lesssim N \lesssim \exp{(t)}$.

\subsection{Related Works}\label{sec:related works}
\paragraph{Estimating the distribution of population of parameters:} The MLE for estimating the unknown distribution of parameters in general nonparametric mixture model setting has been studied extensively in the literature~\cite{lord1975empirical,turnbull1976empirical, simar1976maximum, laird1978nonparametric, cressie1979quick, lindsay1983geometry, lindsay1983geometry2, bohning1989likelihood, lesperance1992algorithm}. There have been several works that focus on the Binomial model~\cite{lindsay1983geometry, lindsay1983geometry2, wood1999binomial}. Most of the prior literature focused on understanding the geometry of the MLE, identifiability and uniqueness of the solutions, and optimality criteria of the MLE. Recently, Vinayak et. al~\cite{vinayak2019maximum} studied the Wasserstein error rate of the MLE in the one dimensional setting, and showed that the MLE achieves error $\mathcal{O}(\frac{1}{t})$ in with $t=\mathcal{O}(\log{N})$ coin tosses and $\mathcal{O}(\frac{1}{\sqrt{t\log{N}}})$ in $t=\Omega(\log{N})$ regime, which are minimax optimal in both cases. Li et. al~\cite{li2015learning} proposed a linear programming based estimator that achieves an error bound of $\mathcal{O}(\frac{1}{t})$ when $t = \mathcal{O}(\log{N})$ and a suboptimal error bound of $\mathcal{O}(\frac{1}{\sqrt{t}})$ otherwise. Tian et. al~\cite{tian2017learning} proposed a method of moments estimator to estimate the unknown distribution of biases in the regime where the number of coin tosses is $t = O(\log N)$. In the multivariate setting where the biases are drawn from a distribution supported on $[0,1]^d$ with constant $d$, their estimator achieves Wasserstein error $O(\frac{1}{t})$ which is minimax optimal. The main weakness of this method of moments approach is that it fails to obtain the optimal rate when $t > c \log{N}$. We will comment on using the joint distribution learned using moment matching estimator in Remark~\ref{remark:moment}.

\paragraph{Symmetric properties of discrete distributions:} Estimating the $\ell_1$-magnitude of change is a special case of functional estimation problems where the goal is to estimate a certain functional of the parameters. Several recent works~\cite{orlitsky2004modeling, acharya2009recent, acharya2010exact, valiant2011estimating, valiant2013estimating,jiao2015minimax, wu2015chebyshev, valiant2016instance, wu2016minimax, orlitsky2016optimal, acharya2017unified, jiao2018minimax, Han2018} have focused on estimating symmetric functionals of discrete distributions, such as entropy and support size. Many of these works have observed that some functionals can be accurately estimated even in the setting where the number of samples is not sufficient to reliably learn the distribution. For example, Valiant and Valiant~\cite{valiant2011estimating} showed that accurate estimation of the entropy of a discrete distribution with support size $n$ only requires $O(\frac{n}{\log n})$ samples while learning the distribution itself requires at least $\Omega(n)$ samples. 

\paragraph{PML framework:} A series of works~\cite{acharya2009recent,acharya2010exact,acharya2017unified} examined the \textit{profile} or \textit{pattern} maximum likelihood (PML) as a unifying framework for estimating symmetric properties of a discrete distribution. A limitation of this framework is that it only applies to scalar symmetric properties of discrete distributions. To the best of our knowledge, it is not known how to apply these approaches to estimating asymmetric properties, for examples $\ell_1$ distance between two discrete distributions which is related to the setting we considered. Another limitation is that the near-optimal sample complexity of using the PML for estimating a function estimation relies on the existence of another stable estimator of the function, and the PML itself does not give the minimax rate. In contrast, we establish the optimality of using derived estimators for the distribution as well as $\ell_1$ magnitude of change by providing matching information theoretic lower bounds for these estimation problems. Furthermore, the setting in our work is different from that of discrete distribution setting (as emphasized in the following paragraph).

\paragraph{Estimating $\ell_1$-distance:} Jiao et. al~\cite{jiao2018minimax} considers the problem of estimating the $\ell_1$-distance between two discrete distributions. There are two major differences between this work and the results in~\cite{jiao2018minimax}. First, Jiao et. al~\cite{jiao2018minimax} considered the discrete distribution setting where the parameters sum to $1$ and the observations follow Poisson distribution. In contrast, in the setting considered in this work, the observations are Binomial and the parameters need not sum to $1$. Second, on the algorithmic side, the estimator in Jiao et. al~\cite{jiao2018minimax} is based on a multivariate piece-wise polynomial function that is specifically designed to approximate the function $|y-x|$. In contrast, our estimator for the $\ell_1$-magnitude of change is constructed by using the MLE solution of the joint distribution, which can be generalized to estimate other functionals.
\section{Main Results}\label{sec:mainresults}
In this section we present our main results. We start with some notations and definitions that help us in presenting the formal results.
Recall that we get $N$ paired binomial observations, $\left\{(X_i, Y_i) \right\}_{i=1}^N$, where $X_i \sim \text{Bin}(t, p_i)$ and $Y_i \sim \text{Bin}(t, q_i)$ are the number of heads seen before and after by coin $i$. The binomial parameters $(p_i, q_i)$ are drawn i.i.d. from the joint distribution $R^\star$. Let $n_{u, v}:= \sum_{i=1}^N\one{\{(X_i, Y_i) = (u, v)\}}$, denote the number of coins that sees tuple $(u, v)$, that is, turned up $u$ heads before and $v$ heads after out of $t$ tosses. Let $h_{u, v} = n_{u, v}/N$ denote the fraction of coins that show $(u, v)$ observation tuple. The matrix $H \in \mathbb{R}^{(t+1)\times (t+1)}_{\geq 0}$, with $H_{u, v} = h_{u, v}$ is the \emph{fingerprint} matrix for the joint observation.
\begin{definition}(\textbf{Distribution of Change})\label{def:distofChange}
Define random variable $\delta = q-p$ where $(p,q) \sim R^\star$. Let $\Delta^\star$ be the distribution of $\delta$, i.e., the true distribution of change in the biases over $[-1, 1]$. Then $\Delta^\star$ can be defined using $R^\star$ as follows, 
for $w \in [-1, 1], $
\begin{align}
   \Delta^\star(w) &:= \int_{z=-1}^w \int_{x=\max{\{0, -z\}}}^{\min{\{1, 1-z\}}} r^\star(x, x+z)\ dx\ dz =: \int_{z=-1}^w \delta^\star(z)\ dz,\label{eqn:changedistributiontrue}
\vspace{-10pt}
\end{align}
where $r^\star$ and $\delta^\star$ denotes the probability density functions (pdf) of the distributions $R^\star$ and $\Delta^\star$ respectively.
\end{definition}
\begin{definition}(\textbf{Magnitude of Change})\label{def:MagofChange}
The $l_1-$magnitude of the change in the biases, denoted by $||\Delta^\star||_1$, is given by the following equation,
\begin{align}
||\Delta^\star||_1 &= \int_{x=0}^1 \int_{y=0}^1 |y-x|\ r^\star(x, y)\ dx\ dy = \int_{z=-1}^1 |z|\ \delta^\star(z)\ dz. \label{eqn:l1ofchange}
\end{align}
\end{definition}
\paragraph{Goal:} Given the paired observations $\{(X_i, Y_i)\}_{i=1}^N$, our goal is to estimate the distribution $\Delta^\star$ and its $l_1-$ magnitude $||\Delta^\star||_1$. 

The accuracy of the estimator of the distribution of change is measured in \emph{Wasserstein-1 distance} or the \emph{Earth Mover's Distance} (EMD) defined below:
\begin{definition}(\textbf{Wasserstein-1 distance}, dual definition~\cite{Kantorovich1958})\label{def:W1def}
The Wasserstein-1 distance or the Earth Mover's Distance (EMD) between two distributions $P$ and $Q$ supported on $\mathcal{M}$ is defined as,
\begin{equation}
W_1(P, Q) = \underset{f \in \text{Lip}(1)}{\text{sup}} \int_{\mathcal{M}} f(x) (dP(x) - dQ(x)),\label{eqn:W1definition}
\end{equation}
where $\text{Lip}(1)$ denotes the set of all Lipschitz-1 functions over $\mathcal{M}$.
\end{definition}

\subsection{Information Theoretic Lower Bounds}\label{sec:minimaxlowerbounds}
The following theorem provides information theoretic lower bounds on the error in estimating the the $l_1-$magnitude of change and the distribution of change under the Wasserstein-1 distance.
\begin{theorem}\label{prop:lowerboundEstDistChange} \label{prop:MinimaxLowerBounds}
Let $R$ be a distribution over $[0, 1]^2$. Let $\mathbf{S} := \{(X_i,Y_i)\}_{i=1}^N$ be random variables with $X_i \sim \text{Binomial}(t, p_i)$, $Y_i \sim \text{Binomial}(t, q_i)$ where $(p_i,q_i)$ is drawn independently from $R$. Define random variable $\delta = q-p$ where $(p,q) \sim R$, and let $\Delta$ be the distribution of $\delta$.
\begin{enumerate}[leftmargin=*, itemsep=0pt]
\item Define $||\Delta_R||_1 = \tE_{(p,q)\sim R}[|q-p|]$. Let $\zeta$ be an estimator that maps $\mathbf{S}$ to a real value $\zeta(\mathbf{S})$. Then for every $t, N$ s.t. $t\le \frac{N^{2(e^4-1)}}{36}$,
\begin{align*}
    \small{\underset{\zeta}{\inf}\ \underset{R}{\sup}\ \text{E}\left[ |  ||\Delta_R||_1-\zeta(\mathbf{S}))| \right]
    =\Omega\left( \frac{1}{t}\right) \vee \Omega\left(\frac{1}{\sqrt{t\log N}} \right)}.
\end{align*}
\item  Let $g$ be an estimator that maps $\mathbf{S}$ to a probability distribution $g(\mathbf{S})$ supported on $[-1,1]$. Then for every $t, N$ s.t. $t\le \frac{N^{2(e^4-1)}}{36}$,
\begin{equation*}
    \underset{g}{\inf}\ \underset{R}{\sup}\ \text{E}\left[W_1(\Delta,g(\mathbf{S}))\right] = \Omega\left( \frac{1}{t}\right) \vee \Omega\left(\frac{1}{\sqrt{t\log N}} \right),\footnote{$a \vee b := \max{\{ a, b\}}$.}
\end{equation*}
\end{enumerate}
\end{theorem}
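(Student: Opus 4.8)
The plan is to prove both parts by the two-point (Le~Cam) method, and to deduce the Wasserstein statement (Part~2) from the magnitude statement (Part~1). For the reduction, note that $f(z)=|z|$ is $1$-Lipschitz, so the dual characterization of $W_1$ in Definition~\ref{def:W1def} gives, for any two distributions of change $\Delta_0,\Delta_1$ supported on $[-1,1]$,
\[
\left|\, ||\Delta_0||_1 - ||\Delta_1||_1 \right| = \left| \int_{-1}^{1} |z|\,(d\Delta_0(z)-d\Delta_1(z)) \right| \le W_1(\Delta_0,\Delta_1).
\]
Hence, given any distribution estimator $g$, the real-valued estimator $\zeta_g(\mathbf{S}):=||g(\mathbf{S})||_1$ satisfies $\mE\big|\,||g(\mathbf{S})||_1-||\Delta_R||_1\big|\le \mE[W_1(\Delta,g(\mathbf{S}))]$ for every $R$. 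Taking suprema over $R$ and infima over estimators shows the Part~1 minimax risk is at most the Part~2 risk, so it suffices to establish Part~1. For the two-point method, the key structural fact is that the law of a single paired observation $(X_i,Y_i)$ depends on $R$ only through the joint moments $\mE_R[p^a q^b]$ with $0\le a,b\le t$, since expanding $(1-p)^{t-j}(1-q)^{t-k}$ in the Binomial pmf only produces powers $\le t$. Thus two priors whose relevant moments agree up to degree $t$ induce \emph{identical} observation laws. I would realize the extremal pairs in the simplest geometry: fix $p\equiv \tfrac12$, so $X_i$ is degenerate and $\delta=q-\tfrac12$; then the observation law depends only on the moments of $q$ up to degree $t$, while $||\Delta_R||_1=\mE[\,|q-\tfrac12|\,]$.

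For the $\Omega(1/t)$ term, I would choose two distributions of $q$ on, say, $[\tfrac14,\tfrac34]$ that match all moments up to degree $t$ yet whose values of $\mE[\,|q-\tfrac12|\,]$ differ by $\Omega(1/t)$. The existence of such a pair is exactly the LP dual of the statement that the best degree-$t$ uniform approximation error of the kink function $|x-\tfrac12|$ on an interval is $\Theta(1/t)$ (the classical Bernstein polynomial-approximation bound). Because the degree-$t$ moments coincide, the $N$-fold observation laws are \emph{exactly equal} for every $N$, so no test can separate the two priors and Le~Cam yields a clean $\Omega(1/t)$ lower bound with no dependence on $N$.

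For the $\Omega(1/\sqrt{t\log N})$ term, exact matching up to degree $t$ is wasteful; instead I would match moments only up to a degree $d\asymp\log N$ and pay a controlled statistical price for the first mismatched moment. Concretely, I would place the construction near the boundary (e.g.\ $q\approx\lambda/t$ with $\lambda\lesssim\log N$, so that $\text{Bin}(t,q)$ Poissonizes), match moments up to degree $d$, and bound the $\chi^2$-divergence between the two single-coin observation laws by a series in the degree-$(d{+}1)$ moment gap. Tensorizing over the $N$ independent coins, the product $\chi^2$ stays $O(1)$ precisely when $d\asymp\log N$. Optimizing the polynomial-approximation separation of the kink at this truncated degree against the indistinguishability budget then produces a gap in $\mE[\,|q-\tfrac12|\,]$ of order $1/\sqrt{t\log N}$. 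The hypothesis $t\le N^{2(e^4-1)}/36$ is what guarantees this near-boundary, degree-$\Theta(\log N)$ construction is admissible, i.e.\ that the perturbed measures remain valid probability distributions on $[0,1]$ and that the $\chi^2$ budget can be met.

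The main obstacle is this second, finite-$N$, regime: everything hinges on the $\chi^2$ (equivalently Hellinger) estimate between two Binomial/Poisson \emph{mixture} observation laws written through their moment differences, and on calibrating the perturbation scale and the truncation degree $d$ so that the per-coin divergence, amplified by the factor $N$, stays bounded while the approximation-theoretic gap is as large as possible. The first regime, by contrast, is essentially free once the moment-matching/polynomial-approximation duality is invoked, since it yields exactly equal observation laws and an $N$-independent separation.
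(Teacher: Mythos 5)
Your overall architecture matches the paper's: you reduce Part~2 to Part~1 via the observation that $|z|$ is $1$-Lipschitz (the paper states this reduction without detail), you fix $p\equiv\tfrac12$ so that the two-dimensional prior is a product $\delta_{1/2}\times(\cdot)$ and only the $Y$-marginal carries information, and for the $\Omega(1/t)$ term you invoke the duality between moment matching and best uniform polynomial approximation of the kink $|x-\tfrac12|$ on an interval of constant length, which gives exactly equal observation laws and an $N$-independent separation. All of that is correct and is essentially the paper's proof.

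The gap is in your $\Omega\bigl(1/\sqrt{t\log N}\bigr)$ construction. You propose placing the perturbation near the boundary, $q\approx\lambda/t$ with $\lambda\lesssim\log N$, in the Poissonization regime. But the functional being estimated is $\mE[\,|q-\tfrac12|\,]$, whose only non-smooth point is $q=\tfrac12$; on $[0,\tfrac12]$ the function $|q-\tfrac12|=\tfrac12-q$ is affine, so \emph{any} two priors supported near the boundary whose first moments agree (and yours agree up to degree $d\ge 1$) have \emph{identical} $\ell_1$-magnitudes. The construction produces zero separation, no matter how the $\chi^2$ budget is calibrated. The paper instead keeps the construction centered at the kink: it applies its Proposition on the narrow interval $\bigl[\tfrac12-\sqrt{\log N/t},\ \tfrac12+\sqrt{\log N/t}\bigr]$, matching the first $L=e^4\log N$ moments exactly, which yields a separation of order $(b-a)/L=\Theta\bigl(1/\sqrt{t\log N}\bigr)$. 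Indistinguishability then comes not from a first-mismatched-moment $\chi^2$ expansion but from the fact that exact matching of $e^4\log N$ moments on an interval of width $2\sqrt{\log N/t}$ forces the induced Binomial mixtures to be within total variation $2\sqrt{t}/N^{e^4}$ per coin (Proposition~4.4 of~\cite{vinayak2019maximum}); tensorizing over $N$ coins and using $t\le N^{2(e^4-1)}/36$ keeps the product TV below $\tfrac13$. So the missing idea is that the interval must shrink around $\tfrac12$ at rate $\sqrt{\log N/t}$ — narrow enough that $O(\log N)$ matched moments already control the observation law, but still containing the kink so the approximation-theoretic separation survives.
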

In the setting in Theorem~\ref{prop:lowerboundEstDistChange}, the following proposition provides the information theoretic lower bound on the Wasserstein-1 error in estimating the joint distribution of $(p, q)$.
\begin{corollary}(Lower bound on estimating the joint distribution.)\label{prop:lowerboundjointMLE} 
Let $\rho$ be an estimator that maps $\mathbf{S}$ to a probability distribution $\rho(\mathbf{S})$ supported on $[0,1]^2$. Then for every $t, N$ s.t. $t\le \frac{N^{2(e^4-1)}}{36}$,
\begin{equation*}
    \underset{\rho}{\inf}\ \underset{R}{\sup}\ \text{E}\left[W_1(R, \rho(\mathbf{S}))\right] = \Omega\left( \frac{1}{t}\right) \vee \Omega\left(\frac{1}{\sqrt{t\log N}} \right).
\end{equation*}
\end{corollary}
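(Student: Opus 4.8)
The plan is to obtain this lower bound as an immediate consequence of the second part of Theorem~\ref{prop:MinimaxLowerBounds} via a reduction: any estimator of the full joint distribution $R$ induces an estimator of the distribution of change $\Delta$ whose $W_1$ error is no larger (up to a universal constant), so the known lower bound on estimating $\Delta$ transfers to estimating $R$. Let $\phi:[0,1]^2 \to [-1,1]$ denote the map $\phi(p,q) = q - p$; by Definition~\ref{def:distofChange}, $\Delta$ is exactly the pushforward $\phi_\# R$.

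First I would record that pushforward under $\phi$ contracts the Wasserstein-1 distance. The map $\phi$ is $\sqrt{2}$-Lipschitz from $([0,1]^2,\|\cdot\|_2)$ to $([-1,1],|\cdot|)$, since $|(q-p)-(q'-p')| \le |q-q'| + |p-p'| \le \sqrt{2}\,\|(p,q)-(p',q')\|_2$. Using the dual characterization (Definition~\ref{def:W1def}), for any distributions $P,Q$ on $[0,1]^2$ and any $f \in \mathrm{Lip}(1)$ on $[-1,1]$, the composition $f\circ\phi$ is $\sqrt{2}$-Lipschitz, so $(f\circ\phi)/\sqrt{2} \in \mathrm{Lip}(1)$; taking the supremum over $f$ gives $W_1(\phi_\# P, \phi_\# Q) \le \sqrt{2}\, W_1(P,Q)$.

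Next, for an arbitrary estimator $\rho$ of the joint distribution I would define the induced change estimator $g_\rho(\mathbf{S}) := \phi_\#(\rho(\mathbf{S}))$, which is a legitimate probability distribution on $[-1,1]$ and hence an admissible $g$ in Theorem~\ref{prop:MinimaxLowerBounds}. Applying the contraction with $P = R$ and $Q = \rho(\mathbf{S})$ gives $W_1(\Delta, g_\rho(\mathbf{S})) \le \sqrt{2}\, W_1(R, \rho(\mathbf{S}))$ pointwise in $\mathbf{S}$, and taking expectations, then $\sup_R$ and the infimum over estimators, yields
\begin{equation*}
\inf_g\ \sup_R\ \mathbb{E}\!\left[W_1(\Delta, g(\mathbf{S}))\right] \;\le\; \sqrt{2}\ \inf_\rho\ \sup_R\ \mathbb{E}\!\left[W_1(R, \rho(\mathbf{S}))\right].
\end{equation*}
Since the left-hand side equals $\Omega(1/t)\vee\Omega(1/\sqrt{t\log N})$ by Theorem~\ref{prop:MinimaxLowerBounds}, dividing through by the constant $\sqrt{2}$ delivers the claimed bound.

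The argument has no serious obstacle; the only points needing care are the contraction of $W_1$ under a Lipschitz pushforward and the verification that $g_\rho$ lies in the estimator class quantified over in Theorem~\ref{prop:MinimaxLowerBounds}. Conceptually, estimating the one-dimensional projection $\Delta = \phi_\# R$ can be no harder than estimating the two-dimensional law $R$ itself, so the lower bound for the former is inherited by the latter — which is exactly why this appears as a corollary rather than being proved from scratch.
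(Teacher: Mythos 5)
Your proposal is correct and matches the paper's intent: the paper dispatches this corollary with the one-line remark that the joint-distribution lower bound is implied by the lower bound for the change distribution (itself implied by the $\ell_1$-magnitude bound), via exactly the reduction chain $\bigl|\,\|\Delta\|_1-\widehat{\|\Delta\|}_1\bigr|\le W_1(\Delta,\widehat{\Delta})\le W_1(R,\widehat{R})$ that it records in Section~\ref{sec:proofsketches}. You have simply made the implication explicit by verifying that $\Delta=\phi_{\#}R$ with $\phi(p,q)=q-p$ Lipschitz, that $W_1$ contracts under Lipschitz pushforwards, and that the induced estimator is admissible in Theorem~\ref{prop:MinimaxLowerBounds}; this is the same argument with the constants tracked more carefully.
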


\subsection{Estimators for the Distribution and the Magnitude of Change}\label{sec:estimators}
\vspace{-5pt}
Our estimators for the distribution and the magnitude of change are derived from the MLE of the joint distribution of the parameters $(p, q)$ given the observations. Given the paired Binomial observations $\{(X_i, Y_i)\}_{i=1}^N$, the MLE of the joint distribution of $(p, q)$ denoted by $\hat{R}_\mle$ is as follows,
\begin{align}
\hat{R}_\mle &\in \underset{S \in \mathcal{S}}{\text{arg max}}\ \frac{1}{N} \log{ \Pr\left( \{ (X_i, Y_i) \}_{i=1}^N\ |\ S \right)}, \label{eqn:jointMLEobj0}\\
&= \underset{S \in \mathcal{S}}{\text{arg max}}\ \frac{1}{N} \sum_{i=1}^N \log{ \Pr\left(  (X_i, Y_i)\ |\ S \right)},\nonumber\\
	& =  \underset{S \in \mathcal{S}}{\text{arg max}}\ \frac{1}{N} \sum_{u=0}^t \sum_{v=0}^t n_{u,v} \log{  \Pr\left(  (u, v)\ |\ S \right)}, \nonumber
\end{align}
where $\mathcal{S}$ is the set of all distributions on $[0, 1]^2$. Note that $\Pr\left(  (u, v)\ |\ S \right)$ can be written as,
\begin{align}
    &\int_0^1 \int_0^1 \binom{t}{u} x^{u} (1 - x)^{t-u} \binom{t}{v} y^{v} (1 - y)^{t-v} dS(x, y)\nonumber =: \tE_S\left[ h_{u, v} \right],
\end{align}
where $E_S[h_{u,v}]$ is the expected fingerprint, i.e., the fraction of the population that sees the tuple $(u, v)$ under the distribution $S$. Therefore, the MLE of the joint distribution of $(p, q)$ is written in terms of the expected fingerprints as follows,
\begin{align}
\hat{R}_\mle &\in \underset{S \in \mathcal{S}}{\text{arg max}}\ \sum_{u=0}^t \sum_{v=0}^t h_{u,v} \log{ \tE_S\left[ h_{u, v} \right]}.\label{eqn:MLEObjFingerprint}
\end{align}
We note that the objective function is concave in the distribution $S$ and it is strictly concave in the expected fingerprints. Therefore, while there can be multiple distributions maximizing the objective function~\eqref{eqn:MLEObjFingerprint}, the expected fingerprints under them will be the same.

\paragraph{Estimator for the distribution of change $\Delta^\star$} over the population is denoted by $\hDelta$ and is derived using the joint MLE is as follows, for $w \in [-1, 1]$,
\begin{equation}
 \widehat{\Delta}(w) := \int_{z=-1}^w \int_{x=\max{\{0, -z\}}}^{\min{\{1, 1-z\}}} \widehat{r}_\mle (x, x+z)\ dx\ dz,\label{eqn:estchangedist1}
\end{equation}
where $\hat{r}_\mle$ is the probability density function of $\hR_\mle$. 

\paragraph{Estimator for the magnitude of change $||\Delta^\star||_1$} over the population, denoted by $\widehat{||\Delta||}_1$, is derived from the joint MLE as follows:
\begin{align}
    \widehat{||\Delta||}_1 &:= \int_{x=0}^1 \int_{y=0}^1 |y-x|\ \hr_\mle(x, y)\ dx\ dy = \int_{z=-1}^1 |z|\ \hat{\delta}(z)\ dz, \label{eqn:estl1ofchange}
\end{align}
where $\hat{\delta}$ is the probability density function of $\widehat{\Delta}$. 

With the setting in Section~\ref{sec:summaryofresults},  the following theorem provides upper bounds on the accuracy of the estimators above:
\begin{theorem}(Estimation of the Distribution of Change)\label{thm:upperbounds}
There exists $\epsilon>0$ such that with probability at least $1 - 2\alpha$,
\begin{enumerate}[leftmargin=*, itemsep=0pt]
\item The Wasserstein-1 error incurred by the estimator in Equation~\eqref{eqn:estchangedist1} can be upper bounded as follows,
\begin{equation*}
W_1( \Delta^\star,  \widehat{\Delta}) = 
\begin{cases}
 \mathcal{O}_{\alpha}\left( \frac{1}{t} \right), \text{ when } t = \mathcal{O}(\log{N}),\\
\mathcal{O}_\alpha\left(\frac{1}{\sqrt{t \log{N}}} \right), \text{ when } t \in \left[ \Omega(\log{N}), \mathcal{O}\left({N^{1/8 - \epsilon}} \right) \right].
\end{cases}
\end{equation*}
\item The error incurred by the estimator in Equation~\eqref{eqn:estl1ofchange} can be upper bounded as,
\begin{align*}
 \left| ||\Delta^\star||_1 - \widehat{||\Delta||}_1 \right| =\begin{cases}
 \mathcal{O}_{\alpha}\left( \frac{1}{t} \right), \text{ when } t = \mathcal{O}(\log{N}),\\
\mathcal{O}_\alpha\left(\frac{1}{\sqrt{t \log{N}}} \right), \text{ when } t \in \left[ \Omega(\log{N}), \mathcal{O}\left({N^{1/8 - \epsilon}} \right) \right].
\end{cases}
\end{align*}
\end{enumerate}
\end{theorem}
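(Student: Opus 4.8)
The plan is to reduce both error bounds to a single statement about the Wasserstein-1 accuracy of the joint MLE, namely $W_1(R^\star, \hat{R}_{\mle})$, and then to establish that this quantity obeys the claimed two-regime rates. First I would observe that both estimators are Lipschitz functionals of $\hat{R}_{\mle}$ in the $W_1$ metric. By construction~\eqref{eqn:estchangedist1}, $\widehat{\Delta}$ is exactly the push-forward of $\hat{R}_{\mle}$ under the map $T(p,q) = q-p$, while $\Delta^\star$ is the push-forward of $R^\star$ under the same $T$ (Definition~\ref{def:distofChange}). Since $T$ is $\sqrt{2}$-Lipschitz from $[0,1]^2$ to $[-1,1]$, the contraction property of push-forwards under Lipschitz maps gives $W_1(\Delta^\star, \widehat{\Delta}) \le \sqrt{2}\,W_1(R^\star, \hat{R}_{\mle})$. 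Likewise, by~\eqref{eqn:estl1ofchange} and Definition~\ref{def:MagofChange} one has $\big|\,||\Delta^\star||_1 - \widehat{||\Delta||}_1\big| = \big|\int_{[0,1]^2} |y-x|\,(dR^\star - d\hat{R}_{\mle})\big|$, and since $(x,y)\mapsto|y-x|$ is $\sqrt{2}$-Lipschitz, the dual definition of $W_1$ (Definition~\ref{def:W1def}) bounds this by $\sqrt{2}\,W_1(R^\star, \hat{R}_{\mle})$. Thus it suffices to prove the two-regime bound for $W_1(R^\star, \hat{R}_{\mle})$, which is the two-dimensional analogue of the one-dimensional MLE guarantee of Vinayak et al.~\cite{vinayak2019maximum}.

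To control $W_1(R^\star, \hat{R}_{\mle})$ I would proceed in three steps. (i) \emph{Concentration:} the observed fingerprints $h_{u,v}$ are averages of $N$ independent indicators, so with probability at least $1-2\alpha$ the entire fingerprint matrix $H$ is uniformly close to its expectation $\tE_{R^\star}[h_{u,v}]$, with deviations of order $\sqrt{\log(1/\alpha)/N}$ per entry; this is where the $\mathcal{O}_\alpha$ dependence on $\log(1/\alpha)$ originates. (ii) \emph{MLE optimality:} because the objective~\eqref{eqn:MLEObjFingerprint} is concave in $S$ and strictly concave in the expected fingerprints, and is maximized at $\hat{R}_{\mle}$, the expected fingerprints $\tE_{\hat{R}_{\mle}}[h_{u,v}]$ are driven close to the observed $h_{u,v}$, and hence by (i) close to $\tE_{R^\star}[h_{u,v}]$; quantitatively this yields a bound on a $\KL$/$\chi^2$-type discrepancy between the two fingerprint matrices. (iii) \emph{Fingerprint-to-Wasserstein conversion:} using the dual definition, for any $f\in\text{Lip}(1)$ on $[0,1]^2$ I would approximate $f$ by a bivariate polynomial $P$ of coordinate-degree $t$ with $\|f-P\|_\infty = \mathcal{O}(1/t)$ (a bivariate Jackson-type theorem), express $P$ in the tensorized Bernstein basis so that $\int P\,d(R^\star - \hat{R}_{\mle})$ becomes an explicit linear combination of the fingerprint differences controlled in (ii), and bound the two contributions separately.

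The hard part will be step (iii). Converting moments up to degree $t$ into Binomial fingerprints (equivalently, inverting the Bernstein basis change) introduces coefficients that grow exponentially in $t$, so a naive bound multiplies the small fingerprint error by a large amplification factor. The crux is therefore to select a Chebyshev-type polynomial approximation, as in the one-dimensional argument, that simultaneously achieves $\mathcal{O}(1/t)$ uniform error \emph{and} controlled Bernstein-coefficient growth, and then to optimize the trade-off between this coefficient blow-up and the statistical fingerprint error from (i)--(ii). This optimization is what produces the two regimes: when $t=\mathcal{O}(\log N)$ the population is large enough ($N \gtrsim \exp(t)$) that the amplified statistical error is dominated by the $\mathcal{O}(1/t)$ approximation error, whereas for $t=\Omega(\log N)$ the balanced rate is $\mathcal{O}(1/\sqrt{t\log N})$. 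Pushing the Bernstein--Chebyshev coefficient control from one to two dimensions, where the relevant basis is effectively squared, is precisely what forces the restriction $t=\mathcal{O}(N^{1/8-\epsilon})$ for a suitable $\epsilon>0$, and this tensorized coefficient analysis is where the bulk of the technical work lies.
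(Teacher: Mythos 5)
Your proposal is correct and follows essentially the same route as the paper: reduce both errors to $W_1(R^\star,\hat R_{\mle})$ via the Lipschitz functional/push-forward observation, then bound that quantity by combining fingerprint concentration, the KL-optimality of the MLE (via Pinsker), and a Chebyshev-to-Bernstein polynomial approximation of Lipschitz-1 functions on $[0,1]^2$ with controlled coefficients, optimizing the degree $k$ to get the two regimes. The trade-off you identify as the crux --- uniform approximation error versus the $(t+1)^2 e^{2k^2/t}$ coefficient blow-up in two dimensions, which is what forces $t=\mathcal{O}(N^{1/8-\epsilon})$ --- is exactly the content of the paper's Lemma~\ref{lem:coeffbound} and the concluding balance of terms.
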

Along with the lower bound in Theorem~\ref{prop:MinimaxLowerBounds}, the above upper bounds show that our estimators for the distribution of change and the $l_1$-magnitude of change given in Equation~\eqref{eqn:estchangedist1} and Equation~\eqref{eqn:estl1ofchange} respectively are optimal in the sparse observation regime.
 
With the setting in Section~\ref{sec:summaryofresults}, the following proposition bounds the estimation error for the joint MLE.
\begin{prop}(MLE for the Joint Distribution)\label{prop:MLEJointW1}
There exists $\epsilon>0$ such that with probability at least $1 - 2\alpha$,\vspace{-5pt}
\begin{equation*}
W_1( R^\star,  \widehat{R}_\mle) = 
\begin{cases}
 \mathcal{O}_{\alpha}\left( \frac{1}{t} \right), \text{ when } t = \mathcal{O}(\log{N}),\\
\mathcal{O}_\alpha\left(\frac{1}{\sqrt{t \log{N}}} \right), \text{ when } t \in \left[ \Omega(\log{N}), \mathcal{O}\left({N^{1/8 - \epsilon}} \right) \right].
\end{cases}
\end{equation*}
\end{prop}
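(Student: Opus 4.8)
The plan is to control $W_1(R^\star,\hR_\mle)$ through the Kantorovich dual (Definition~\ref{def:W1def}), exploiting the fact that matching the expected fingerprints $\tE_S[h_{u,v}]$ is the same as matching the integrals against $S$ of the tensor--Bernstein polynomials $b_{u,v}(x,y):=\binom{t}{u}x^{u}(1-x)^{t-u}\binom{t}{v}y^{v}(1-y)^{t-v}$. Since $\{b_{u,v}\}_{0\le u,v\le t}$ spans all tensor polynomials of coordinate-degree at most $t$ on $[0,1]^2$, two distributions with nearly equal fingerprints integrate every low-degree polynomial to nearly the same value, and polynomial approximation of Lipschitz test functions then converts this into a $W_1$ bound. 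I would therefore split the argument into three steps: (i) concentration of the empirical fingerprint $H$ about the true fingerprint $\phi^\star_{u,v}:=\tE_{R^\star}[h_{u,v}]$; (ii) a convexity/optimality argument showing the MLE fingerprint $\hat\phi_{u,v}:=\tE_{\hR_\mle}[h_{u,v}]$ is close to $H$, hence to $\phi^\star$; and (iii) the conversion of fingerprint closeness into a Wasserstein bound.

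For step (i), each $h_{u,v}$ is an average of $N$ i.i.d.\ indicators with mean $\phi^\star_{u,v}$, so a Bernstein bound and a union bound over the $(t+1)^2$ cells give, with probability at least $1-\alpha$, a cellwise deviation which (by Cauchy--Schwarz, using $\sum_{u,v}\phi^\star_{u,v}=1$) sums to $\|H-\phi^\star\|_1=\tilde{\mathcal O}((t+1)\sqrt{\log(1/\alpha)/N})=:\eta$. For step (ii), optimality of $\hR_\mle$ in~\eqref{eqn:MLEObjFingerprint} gives $\sum_{u,v}h_{u,v}\log(\hat\phi_{u,v}/\phi^\star_{u,v})\ge 0$, i.e.\ $\KL(H\,\|\,\hat\phi)\le\KL(H\,\|\,\phi^\star)$; bounding the latter by its empirical-process value $\tilde{\mathcal O}((t+1)^2/N)$ and invoking Pinsker's inequality yields $\|\hat\phi-\phi^\star\|_1\lesssim\eta$. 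Thus $R^\star$ and $\hR_\mle$ induce essentially the same expected fingerprints.

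Step (iii) is the crux and the principal obstacle. In the dual, $W_1(R^\star,\hR_\mle)=\sup_{f\in\mathrm{Lip}(1)}\int f\,d(R^\star-\hR_\mle)$. Writing a tensor polynomial as $P=\sum_{u,v}c_{u,v}b_{u,v}$ and splitting $\int f\,d(R^\star-\hR_\mle)=\int(f-P)\,d(R^\star-\hR_\mle)+\sum_{u,v}c_{u,v}(\phi^\star_{u,v}-\hat\phi_{u,v})$ bounds the first term by $2\|f-P\|_\infty$ and the second by $(\max_{u,v}|c_{u,v}|)\,\|\phi^\star-\hat\phi\|_1\le C_d\,\eta$. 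Choosing $P$ to be a near-best degree-$d$ approximation of $f$ makes $\|f-P\|_\infty=\mathcal O(1/d)$ but inflates the coefficient magnitude $C_d$ geometrically in $d$; the entire difficulty is to control $C_d$ so that the amplified noise $C_d\,\eta$ stays below $\mathcal O(1/d)$, and then to optimize $d$. When $t=\mathcal O(\log N)$ the population is exponentially large in $t$, so $\eta$ is super-polynomially small; one may take $d=\Theta(t)$, absorb the coefficient blow-up entirely, and be limited only by the intrinsic floor for matching $t$ moments, namely the degree-$t$ polynomial-approximation error $\Theta(1/t)$ of Lipschitz functions. This yields the dense-regime rate $\mathcal O(1/t)$.

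When $t=\Omega(\log N)$ the fingerprint noise dominates and the floor is instead set by deconvolution: $\mathrm{Bin}(t,p)$ behaves, in the $p$-scale, like Gaussian blur of width $\sigma\sim 1/\sqrt{t}$, and recovering $R^\star$ from its blurred fingerprint is a super-resolution problem whose resolution improves over the naive $\sigma$ by only a factor $\sqrt{\log N}$. Realizing this gain is exactly the role of Chebyshev polynomials tuned to the local scale $\sigma$: their growth is calibrated to equal the inverse noise level $\sim\sqrt{N}$, giving effective resolution $\sigma/\sqrt{\log N}$ and hence the rate $\mathcal O(1/\sqrt{t\log N})$; the two rates coincide at the boundary $t\asymp\log N$. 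The two-dimensionality is handled by tensorizing the one-dimensional near-best approximations coordinate-wise, so the coefficient bounds multiply and the one-dimensional analysis of~\cite{vinayak2019maximum} carries through at the cost of squared logarithmic factors and a halved exponent in the admissible range of $t$; the condition $t=\mathcal O(N^{1/8-\epsilon})$ is precisely what keeps $C_d\,\eta$ below the approximation error while the local-CLT corrections to the Binomial-to-Gaussian approximation remain negligible. I expect essentially all the work to lie in this coefficient control, the concentration and optimality arguments of steps (i)--(ii) being comparatively routine. Finally, Theorem~\ref{thm:upperbounds} follows immediately: $\hDelta$ is the pushforward of $\hR_\mle$ under the $1$-Lipschitz map $(p,q)\mapsto q-p$ and $\widehat{||\Delta||}_1$ is a bounded-Lipschitz functional of it, so both errors are dominated by $W_1(R^\star,\hR_\mle)$.
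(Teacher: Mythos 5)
Your proposal follows essentially the same route as the paper's proof: the same dual decomposition into a polynomial approximation error plus a weighted fingerprint discrepancy, the same MLE-optimality-plus-Pinsker argument (with the KL concentration bound) for step (ii), and the same Chebyshev-to-Bernstein coefficient control with degree $k=\Theta(t)$ or $k=\Theta(\sqrt{t\log N})$ in step (iii); the only cosmetic difference is that the paper bounds the sampling term directly via McDiarmid on the weighted sum rather than via a cellwise union bound and an $\ell_1$ estimate. The concluding observation that the bounds for $\hDelta$ and $\widehat{||\Delta||}_1$ follow because $(p,q)\mapsto q-p$ is $1$-Lipschitz is also exactly how the paper derives Theorem~\ref{thm:upperbounds}.
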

Along with the lower bound in Proposition~\ref{prop:lowerboundjointMLE}, the above upper bound shows that the MLE is minimax optimal for estimating the joint distribution in the sparse observation regime.
Furthermore, we note that the upper bound on the error in estimating the joint distribution of the parameters $(p, q)$ is of the same order as the lower bounds on error in estimating the distribution and magnitude of change. Even though the $l_1$-magnitude of change is a scalar value, estimating it is no easier than estimating the whole joint distribution itself.

\begin{remark}\normalfont{The standard {\textbf{naive plug-in empirical estimator}} which uses the empirical histogram of the observations, $\{ \left(\frac{X_i}{t},\frac{Y_i}{t} \right) \}_{i=1}^N$, would be sub-optimal in estimating the quantities of interest in the sparse observation regime since it incurs an error of $\Theta\left(\frac{1}{\sqrt{t}}\right)$.} 
\end{remark}

\begin{remark}\label{remark:moment}
\normalfont{\textbf{Moment based estimation:} 
Our estimators for the distribution and $l_1$-magnitude of change in Equations~\eqref{eqn:estchangedist1} and~\eqref{eqn:estl1ofchange} can be easily modified by replacing $r_\mle$ with the estimated joint distribution of $(p, q)$ using moment matching~\cite{tian2017learning} to obtain corresponding estimators derived by moment matching. These derived estimators would achieve the minimax optimal error rate of $\Theta\left(\frac{1}{t}\right)$ when $t < c\ \log{N}$ (i.e., when $N \gtrsim \exp{(t)}$). However, they fail to achieve optimal error rates when $t > c\ \log{N}$ due to higher variance in larger moments.}
\end{remark}
\begin{remark}\normalfont{
\textbf{Estimating change with only observation of change:} 
Binomial distribution is not transnational invariant, therefore, observing only the change, $Z = Y-X$, is not enough to estimate the distribution of change in parameters, $\delta = q - p$. Unbiased estimate of the average change in the parameters, $\tE[\delta]$, can be obtained using only the difference, $\frac{1}{N}\sum_{i=1}^N \frac{Z_i}{t}$. However, unbiased estimates of the higher moments would need the joint moments. For example, an unbiased estimate of the second moment $\tE[\delta^2]$ would be 
\[ \widehat{\tE[\delta^2]} := \frac{1}{N} \sum_{i=1}^N  \left( \frac{\binom{X_i}{2}}{\binom{t}{2}}+  \frac{\binom{Y_i}{2}}{\binom{t}{2}} -  2\frac{X_i Y_i}{t^2}\right), \]
which involves all the estimated joint second moments of $(p, q)$. In contrast, a naive empirical estimate using just the change $Z$ gives rise to a biased estimate, $ \frac{1}{N}\sum_{i=1}^N \left(\frac{Y_i-X_i}{t}\right)^2 = \frac{1}{N}\sum_{i=1}^N \delta_i^2 + \frac{1}{N}\sum_{i=1}^N \frac{q_i(1-q_i) + p_i(1-p_i)}{t}$. For small $t$, the second term is not zero unless all $(p_i, q_i)$ are 0 or 1.
}
\end{remark}
\section{Proof Sketches for Main Results}\label{sec:proofsketches}
In this section we provide a brief sketch of the proofs the main results.
\paragraph{Proof sketches for lower bounds: }
To show the information theoretic lower bound on the error in estimating the $\ell_1$-magnitude of change, we use the following proposition.
\begin{proposition}\label{prop:prior-exist-magnitude}
For any integer $s > 0$, $\exists$ a pair of distributions $P$ and $Q$ supported on $[a, b]$ where $0 < a < b$, such that their first $s$ moments match, and \[ \left|\int_a^b \left|x-\frac{a+b}{2}\right|P(x)dx -\int_a^b \left|x-\frac{a+b}{2}\right|Q(x)dx \right|= \Omega\left(\frac{b-a}{s}\right). \]
\end{proposition}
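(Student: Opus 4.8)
The plan is to reduce the statement to a classical approximation-theoretic fact about the function $|y|$. First I would normalize the interval by the affine change of variables $y = \frac{2}{b-a}\left(x-\frac{a+b}{2}\right)$, which carries $[a,b]$ onto $[-1,1]$, sends the midpoint $\frac{a+b}{2}$ to $0$, and gives $\left|x-\frac{a+b}{2}\right| = \frac{b-a}{2}\,|y|$. Because each $x^k$ is a degree-$k$ polynomial in $y$, two measures on $[a,b]$ share their first $s$ moments if and only if their pushforwards on $[-1,1]$ do. Hence it suffices to produce probability measures $\mu_1,\mu_2$ on $[-1,1]$ whose first $s$ moments agree and for which $\left|\int|y|\,d\mu_1-\int|y|\,d\mu_2\right|=\Omega(1/s)$; pulling these back to $[a,b]$ and multiplying the gap by $\frac{b-a}{2}$ yields the advertised $\Omega\!\left(\frac{b-a}{s}\right)$.

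For the construction I would invoke the standard duality between moment matching and best polynomial approximation. Writing $E_s(f):=\min_{\deg g\le s}\max_{y\in[-1,1]}|f(y)-g(y)|$, this duality asserts
\[
\sup\left\{\int f\,d\mu_1-\int f\,d\mu_2 \ :\ \mu_1,\mu_2\in\mathcal{P}([-1,1]),\ \int y^j d\mu_1=\int y^j d\mu_2 \text{ for } 0\le j\le s\right\}=2\,E_s(f).
\]
The inequality $\le$ is immediate: for any polynomial $g$ of degree at most $s$ the matched moments give $\int g\,d\mu_1=\int g\,d\mu_2$, so $\int f\,d\mu_1-\int f\,d\mu_2=\int(f-g)\,d\mu_1-\int(f-g)\,d\mu_2\le 2\|f-g\|_\infty$, and minimizing over $g$ gives $2E_s(f)$. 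Taking $f(y)=|y|$, I would then lower bound $E_s(|y|)$ by Bernstein's classical estimate $E_s(|y|)=\Theta(1/s)$, so in particular $E_s(|y|)\ge c/s$ for an absolute constant $c>0$; the extremal pair therefore realizes a gap of $2E_s(|y|)=\Omega(1/s)$ on $[-1,1]$.

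The step I expect to be the main obstacle is the reverse direction of the duality, namely exhibiting \emph{genuine} probability measures (nonnegative, unit mass) that attain $2E_s(|y|)$. I would obtain this from strong Lagrangian duality for the finite moment problem: since $\mathcal{P}([-1,1])$ is weak-$*$ compact and the relevant functionals are continuous, the supremum is attained, and its dual is precisely the best-approximation problem, certifying the value $2E_s(f)$. Complementary slackness, combined with Chebyshev's equioscillation theorem, then pins down the optimizer: the optimal polynomial $g^\star$ for $|y|$ satisfies $|y|-g^\star(y)=\pm E_s(|y|)$ with alternating signs on $s+2$ nodes in $[-1,1]$, and the extremal $\mu_1,\mu_2$ are supported on the $+$ nodes and the $-$ nodes respectively, with the nonnegativity of their weights being exactly the content of the duality (one may instead cite the moment-matching lemma from the unseen-estimation literature, which packages this construction directly). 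Finally, since the proposition is phrased with densities, I would note that these discrete optimizers can either be read against the measures $dP,dQ$ directly or mollified by an arbitrarily narrow smooth kernel, which alters $\int\left|x-\frac{a+b}{2}\right|\,dP$ by an arbitrarily small amount while preserving the first $s$ moments, so the $\Omega\!\left(\frac{b-a}{s}\right)$ gap survives.
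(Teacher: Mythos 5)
Your proposal is correct and follows essentially the same route as the paper: the paper also rescales $[-1,1]$ affinely to $[a,b]$ and obtains the moment-matched pair with gap $2E_s[|x|;[-1,1]]=\Omega(1/s)$ by citing Lemma~29 of Jiao et al.\ (the packaged duality result you mention as an alternative), whereas you additionally sketch the duality/equioscillation argument behind that lemma. No substantive difference in approach.
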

Using Proposition~\ref{prop:prior-exist-magnitude} we construct two distributions $P^*$ and $Q^*$ supported on $[0, 1]$ such that their first $t$ moments match and, 
\[\int_0^1 \left|x-\frac{1}{2}\right|P(x)dx - \int_0^1 \left|x-\frac{1}{2}\right|Q(x)dx \ge \Omega\left(\frac{1}{t}\right).\]
Let $U=\delta_{1/2}$ be a distribution supported only at $1/2$.
Define $2$-dimensional distribution $P = U\times P^*, Q = U\times Q^*$. We first show that the difference in the $\ell_1$-magnitude of change of these two joint distributions is lower bounded by $1/t$. We then argue that if the tuple $(p, q)$ are drawn from $P$ or $Q$ is information theoretically not distinguishable from the binomial observations $(X, Y)$. To show the lower bound for the case when $t > c\ \log{N}$, we construct distributions similar to above supported on a small interval around $1/2$ whose first $\log{N}$ moments match while the $\ell_1$-magnitudes of change differ by $\frac{1}{\sqrt{t\ \log{N}}}$. The lower bounds on the error in estimating the $\ell_1$-magnitude of change imply the lower bounds for estimating the distribution of change in the parameters as well.

\paragraph{Proof Sketches for the Upper Bounds:}
Using the definition of $\Delta^\star$ (Equation~\eqref{eqn:changedistributiontrue}) and changing variables from $(x, z:= y - x)$ to $(x, y)$,  
the Wasserstein-1 distance between the true distribution of change $\Delta^\star$ and the estimated change distribution $\hDelta$ can be written as follows,
\begin{align}
W_1( \Delta^\star,  \hDelta)=\underset{f \in \text{Lip}(1)}{\text{sup}} \int_{0}^1 \int_{0}^1 f(y-x) \left(dR^\star(x,y) - d\hR_\mle(x,y)\right).\label{eqn:W1ofDistofChange}
\end{align}
The estimation error for the magnitude of change is given by:
\begin{align}
 \left|||\Delta^\star||_1 - \widehat{||\Delta||}_1\right| =\left|\int_0^1 \int_0^1 |y-x| (dR^\star(x,y) - d\hR_\mle(x,y)) \right|.\label{eqn:ErrorOfMagofChange}
\end{align}
By the definition the Wasserstein-1 distance between the true joint distribution $R^\star$ and $\hR_\mle$ is as follows,
\begin{align}
    W_1( R^\star,  \widehat{R}_\mle) =\underset{f \in \text{Lip}(1)}{\text{sup}} \int_{0}^1 \int_{0}^1 f(x, y) (dR^\star(x, y) - d\hR_\mle(x, y)).\label{eqn:W1ofJointDist}
\end{align}
Noting that $|y-x|$ is a Lipschitz-1 function, all the estimator error expressions in Equations~\eqref{eqn:W1ofDistofChange},~\eqref{eqn:ErrorOfMagofChange} and~\eqref{eqn:W1ofJointDist} involve Lipschitz-1 functions. It follows from the definition that,
\begin{align*}
\left|||\Delta^\star||_1 - \widehat{||\Delta||}_1\right| \le W_1( \Delta^\star,  \hDelta) \le W_1( R^\star,  \widehat{R}_\mle).
\end{align*}

The following bound holds for any $f \in \text{Lip}(1)$ supported on $[0, 1]^2$ and any polynomial approximation $\hf$ of $f$ on $[0, 1]^2$. 
\begin{align}
&\left| \int_{x=0}^1 \int_{y=0}^1 f(x, y) (dR^\star(x, y) - d\hR_\mle(x, y)) \right|\nonumber\\
&= \left| \int_{0}^1 \int_{0}^1 \left(f(x, y) - \hf(x,y) \right) (dR^\star - d\hR_\mle) +  \int_{0}^1 \int_{0}^1 \hf(x,y) (dR^\star - d\hR_\mle) \right| ,\nonumber\\
&\leq 2||f - \hf||_\infty + \left|\tE_{ R^\star}[\hf] - \tE_{ \widehat{R}_\mle}[\hf]  \right|,\label{eqn:bound2dintegral}
\end{align}
where $||f - \hat{f}||_\infty := \underset{(x,y) \in [0, 1]^2}{\max}|f(x, y) - \hf(x, y)|$ is the uniform polynomial approximation error. Let $\hf$ be the 2-d Bernstein polynomial of degree $2t$ defined by,
\begin{equation}
     \hf_t(x, y) = \sum_{u=0}^t \sum_{v=0}^t \kappa_{uv} B_{u}^t(x)  B_{v}^t(y), \label{eqn:BernsteinPolynomialApprox2d}
\end{equation}
where $B_{u}^t(x) := \binom{t}{u}x^u (1-x)^{t-u}$ is the $u-$th Bernstein polynomial of degree $t$. 
Using this in Equation~\eqref{eqn:bound2dintegral}, the EMD between the true joint distribution of $(p, q)$, $R^\star$ and the MLE estimator $\hR_\mle$ can be bounded as,
\begin{align}
W_1(R^\star, \hR_{\mle}) &\leq \underset{f \in \text{Lip}(1)}{\text{sup}} \left\{ 2 ||f - \hf||_\infty + \left| \sum_{u=0}^t \sum_{v=0}^t \kappa_{uv} \left(\tE_{ R^\star}[h_{uv}] - \tE_{ \widehat{R}_\mle}[h_{uv}]\right)  \right|  \right\},\label{eqn:W1JointBound}
\end{align}
This upper bound also holds for Equations~\eqref{eqn:W1ofDistofChange} and~\eqref{eqn:ErrorOfMagofChange}.
The second term in the bound above Equation~\eqref{eqn:W1JointBound}
can be bounded by the following expression,
\begin{align*}
&\left| \sum_{u=0}^t \sum_{v=0}^t \kappa_{uv} \left(\tE_{ R^\star}[h_{uv}] - \tE_{ \widehat{R}_\mle}[h_{uv}]\right)  \right| 
\overset{(a)}{\leq} \mathcal{O}\left( \max_{u,v} |\kappa_{uv}| \sqrt{\frac{(t+1)^2}{2N} \log{\frac{4N}{(t+1)^2}} + \frac{1}{N} \log{\frac{3e}{\alpha}} }\right),
\end{align*}
where the inequality $(a)$ uses triangle inequality followed by concentration of fingerprints and bound on the error in 2-d fingerprint due to MLE solution. Note that the term $ \max_{(u,v)} |\kappa_{uv}|$ appears in the bound above. Therefore, it is crucial to bound the uniform approximation error $||f - \hf||_\infty$ while keeping the coefficients of the approximation~\eqref{eqn:BernsteinPolynomialApprox2d} from being too large. Polynomial approximation of Liptschitz-1 functions on $[0, 1]^2$ using Bernstein polynomials play a crucial role in obtaining a tight bound that match the information theoretic lower bounds.
We extend the analysis of~\cite{vinayak2019maximum} to obtain these bounds. The details are available in the appendix.
\section{Empirical Performance}\label{sec:simulations}
\paragraph{Solving the MLE:} Recall the objective function for the joint MLE in Equation~\ref{eqn:MLEObjFingerprint}. The objective function is concave in the distribution $S$ and strictly concave in the expected fingerprints. Furthermore, the set $\mathcal{S}$ of all distributions over $[0, 1]^2$ is convex. Thus the MLE of the joint distribution in this setting is an optimization problem of maximizing a concave function over a convex set. We solve this optimization problem on a $100 \times 100$ uniform grid on $[0, 1]^2$. The objective function of the joint MLE on a $m \times m$ grid on $[0, 1]^2$ is as follows,
\[F(S) :=  \sum_{u = 0}^t \sum_{v=0}^t h_{u, v}^\text{obs} \log{ \left(\sum_{i = 0}^m \sum_{j=0}^m \binom{t}{u} \left(\frac{i}{m}\right)^{u} \left(1 - \frac{i}{m}\right)^{t-u} \binom{t}{v} \left(\frac{j}{m}\right)^{v} \left(1 - \frac{j}{m}\right)^{t-v} S_{ij}\right)}. \]
So, the MLE objective is to maximize $F(S)$ for $S$ on the simplex, that is, $S_{ij} \geq 0 $ and $\sum_i \sum_j S_{ij} = 1$. Let $\text{vec}(S)$ denote vectorized version of the matrix $S$ by stacking the columns and $\triangledown_{S}F$ denote the gradient of $F$ at $S$. We use exponential gradient descent algorithm (or equivalently, mirror descent with KL divergence) to optimize this objective as described below:
\begin{enumerate}
    \item Input: Observed fingerprints $\{h_{u,v}^\text{obs}\}_{u, v = 0}^t$, learning rate $\eta$, grid size $m$, tolerance $\varepsilon$, maximum iterations: $\textit{MaxIters}$.
    \item Start with $S(0)$ being Uniform distribution on $[0,1]^2$, i.e., $[S(0)]_{ij} = \frac{1}{(m+1)^2}$.
    \item For $\text{iter} = 0, 1, 2, ...., \textit{MaxIters}$
    \begin{itemize}
        \item Update: $[S(\text{iter}+1)]_{ij}$ = $[S(\text{iter})]_{ij}\ \exp{\{\eta\ \left[\triangledown_{S(\text{iter})}F\right]_{ij} \}}$, for $i, j = 0,...., m$, where 
        \item Terminate if $||\text{vec}(S(\text{iter}+1)) - \text{vec}(S(\text{iter}))||_2 < \varepsilon$.
    \end{itemize}
\end{enumerate}
We run the above algorithm with a learning rate $0.5$, stopping criteria of $||\text{vec}(s_{iter}) - \text{vec}(s_{{iter}-1})||_2 < 5\times10^{-7}$ and maximum iterations set to $2\times10^4$. Estimates $\hDelta$ and $\widehat{||\Delta||}_1$ are obtained using the discrete joint MLE solution. All the experiments were run on Matlab\textsuperscript{\textregistered}R2017b.
\begin{figure*}[t!]
 \centering
    \includegraphics[width=0.9\textwidth]{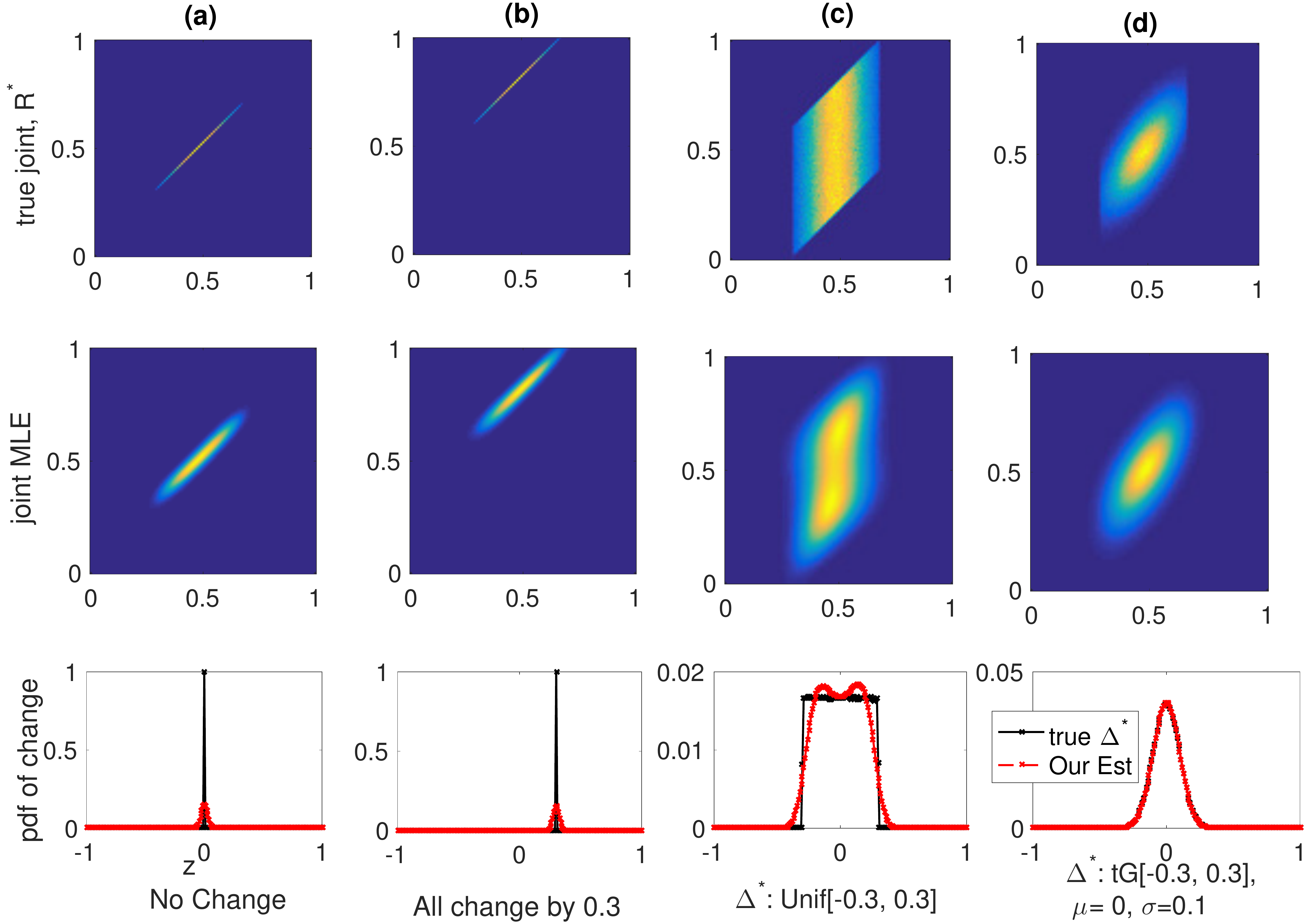}
    \caption{Empirical performance of the MLE in estimating the joint distribution (second row) and our estimator for distribution of change derived from the joint MLE (third row) for various true distributions (first row): (a) no change from before to after, (b) all the biases are increased by $0.3$, (c) trapezoid joint distribution where the change distribution is uniform on $[-0.3, 0.3]$ and (d) truncated Gaussian (tG) joint, with change distribution also a tG supported on $[-0.3, 0.3]$ with mean $\mu=0$ and standard deviation $\sigma=0.1$. N = 1e6, t = 5.}\label{fig:SimulationsJointDiffChange}
\end{figure*}

\paragraph{Estimation of the distribution of change:} Figure~\ref{fig:SimulationsJointDiffChange} shows the empirical performance of the MLE for estimating joint distribution of $(p, q)$ given the paired before and after observations (second row) and the derived estimate of the distribution of change (third row) for a variety of true joint distributions (first row). The number of coins is $N = 10^6$ and the number of tosses before and after is $t = 5$ for all the examples in the figure.

\paragraph{Estimation of the magnitude of change} Our estimation gets better as the number of observations per coin increases as the theory suggests. Figure~\ref{fig:VarytMagnitudeofChange} shows the error in estimating the distribution and the $l_1$-magnitude of change for varying number of tosses $t \in \{2, 3, 4, 5, 6, 8, 10, 12, 15, 20, 25, 30, 50 ,100\}$ for a variety of change distributions. The number of coins is $N=10^6$. Each experiment is run 10 times. For all the figures presented, the distribution of biases before was truncated Gaussian on $[0.3, 0.7]$ with $\mu=0.5$ and $\sigma = 0.1$. For comparison we have plotted estimators using the joint distribution estimated using the MLE (in red), the moment matching estimator (in green; also implemented using exponential gradient descent) and the naive empirical estimator (in blue).
We observed the same trend for other distributions for biases before, e.g., $3$-spikes of equal mass at $\{0.3, 0.5, 0.7\}$ and for the uniform distribution over $[0.3, 0.7]$. 

\paragraph{Paired test using the $l_1$-magnitude of change:} $l_1$-magnitude of change in the biases is a natural test statistic for nonparametric paired hypothesis testing. The null hypothesis is that there was no change in the biases from before to after.
That is, under the null, the change distribution is a spike at $0$ and consequently the $l_1$-magnitude of change is $0$. 
\begin{figure*}[b!]
 \centering
    \includegraphics[width=\textwidth]{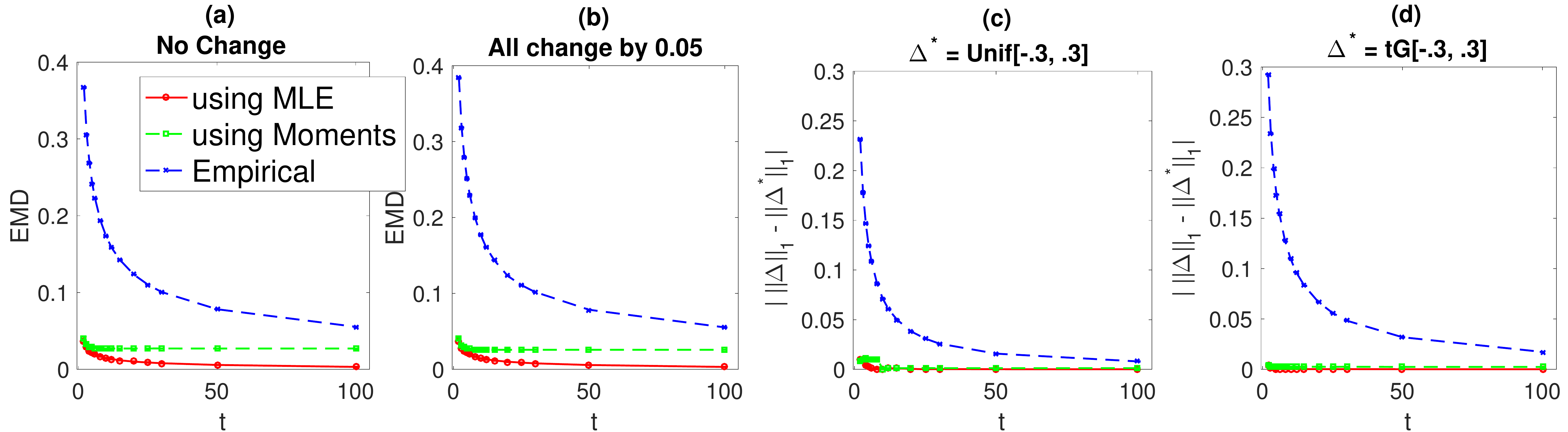}
    \caption{Error in estimating the distribution of change (measured in EMD) for various estimators as a function of varying number of tosses $t$ under (a) no change, (b) all biases increased by $0.05$, and $l_1-$magnitude of change for (c) $\Delta^\star$: Unif$[-0.3, 0.3]$, (d) $\Delta^\star$: tG$[-0.3, 0.3]$ with $\mu=0$, $\sigma=0.1$. N = 1e6. 10 experiments.}\label{fig:VarytMagnitudeofChange}
\end{figure*}
\begin{figure}[t!]
 \centering
 \includegraphics[width=0.75\columnwidth]{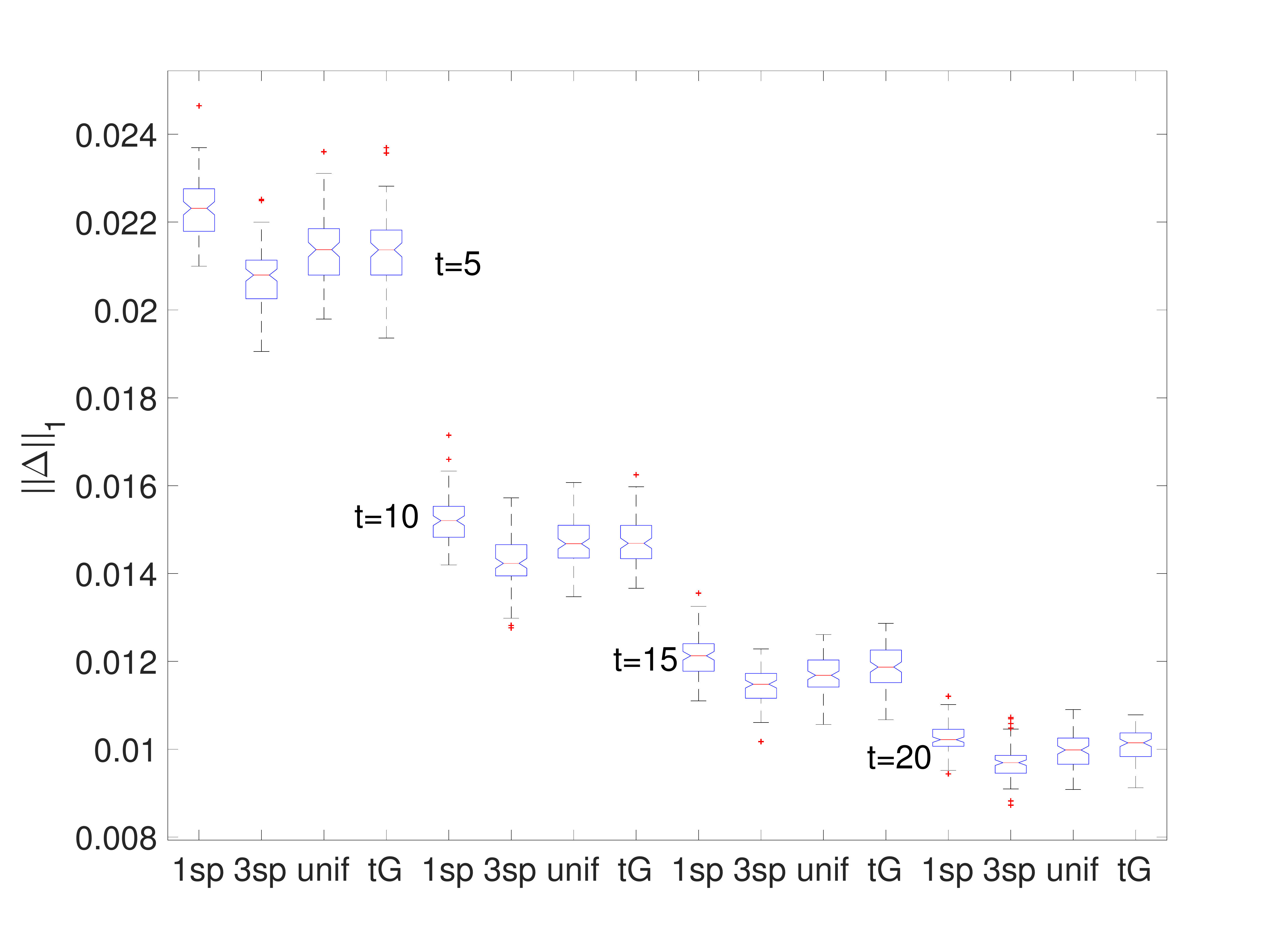}
 \caption{{\small{Estimated magnitude of change under the null hypothesis for $t=\{5, 10, 15, 20\}$ and for various before distributions: 1 spike at $0.5$ (1sp), 3 spikes (3sp), uniform (unif) and truncated Gaussian (tG). 100 Experiments, N=1e6. Box edges show $25$-th and $75$-th percentile, whisker edges are $\pm 2.7\text{std}$~\cite{MatlabBoxplot}.}}}
 \label{fig:HtestNullMagChange}
\end{figure}
Commonly used nonparametric paired difference tests like the sign test and the Wilcoxon's signed rank test are designed to test for change in mean or median. If the change in the biases in our setting is symmetric, e.g., uniform or truncated Gaussian around $0$, then both the sign test and the Wilcoxon's signed rank test fail to reject the null hypothesis. In contrast, using $\widehat{||\Delta||}_1$ as a test statistic we can successfully reject null hypothesis even when the change is symmetric around $0$.
Furthermore, while the sign test and the Wilcoxson's signed rank test can detect systematic shifts in change for large $N$, they do not provide a sense of how much change occurred. Often quantities like correlation, rank correlation and other metrics are estimated to infer the \emph{effect size}. In contrast, using the $l_1$-magnitude of change for testing provides a transparent and natural quantitative measure of the effect size.
Figure~\ref{fig:HtestNullMagChange} shows the $\widehat{||\Delta||}_1$ for various joint distributions that lead to the null hypothesis (no change) for $t = \{5, 10, 15, 20\}$ and $N = 10^6$. For a given $t$ and $N$, as long as the the $l_1$-magnitude of change is higher than the critical value (that can be picked via simulations by setting desired level of false positives admissible), hypothesis test using $\widehat{||\Delta||}_1$ can reject the null hypothesis.

\section*{Acknowledgements}
We thank our colleagues Jennifer Brennan, Maru Cabrera, Lalit Jain, John Thickstun and Jesse Thomason for the helpful discussions and comments. We also thank anonymous reviewers for their feedback and suggestions.

\bibliographystyle{alpha}
\bibliography{refs}

\appendix
\section{Proofs of Minimax Lower Bounds}
We will first prove the following lemma that will be used in the proof of information theoretic lower bound on the error in estimating the $\ell_1$-magnitude of the change.
\begin{proposition}\label{prop:prior-exist}
For any positive integer $s$, there exists a pair of distributions $P$ and $Q$ supported on $[a, b]$ where $0 < a < b$ such that $P$ and $Q$ have identical first $s$ moments, and $|\int_a^b |x-\frac{a+b}{2}|P(x)dx -\int_a^b |x-\frac{a+b}{2}|Q(x)dx|= \Omega(\frac{b-a}{s})$.
\end{proposition}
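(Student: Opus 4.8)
\emph{Proof proposal.} The plan is to recast the construction as a best-polynomial-approximation problem through moment-matching duality. Write $g(x) := |x - \tfrac{a+b}{2}|$ and set $\nu := P - Q$. Then the requirement that $P$ and $Q$ be probability distributions on $[a,b]$ sharing their first $s$ moments is equivalent to asking that $\nu$ be a signed measure on $[a,b]$ with $\int_a^b x^k\,d\nu = 0$ for all $0 \le k \le s$ and with total variation $\|\nu\|_{\mathrm{TV}} \le 2$; indeed the $k=0$ constraint $\int d\nu = 0$ forces the positive and negative Jordan parts $\nu_+,\nu_-$ to have equal mass $m$, and $\|\nu\|_{\mathrm{TV}} = 2m \le 2$ is exactly what lets one realize $\nu = P - Q$ with $P,Q$ probability measures. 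The objective $\int_a^b g\,d\nu$ is then the quantity to be made $\Omega((b-a)/s)$.

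First I would establish the duality identity $\max\{\int g\,d\nu : \int x^k\,d\nu = 0\ (0\le k\le s),\ \|\nu\|_{\mathrm{TV}}\le 2\} = 2\,E_s(g)$, where $E_s(g) := \inf_{\deg p \le s}\|g - p\|_{\infty,[a,b]}$ is the degree-$s$ uniform approximation error. The upper bound is immediate: for any polynomial $p$ of degree at most $s$ we have $\int g\,d\nu = \int (g-p)\,d\nu \le \|g-p\|_\infty\,\|\nu\|_{\mathrm{TV}} \le 2\|g-p\|_\infty$, so infimizing over $p$ gives $\le 2E_s(g)$. For achievability I would invoke the Chebyshev equioscillation theorem: the best approximant $p^\star$ makes $g - p^\star$ attain $\pm E_s(g)$ with alternating signs at some $s+2$ points $x_0 < \cdots < x_{s+1}$ in $[a,b]$. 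Placing a discrete signed measure $\nu = \sum_j c_j\,\delta_{x_j}$ on these points and imposing the $s+1$ constraints $\int x^k\,d\nu = 0$ leaves a one-dimensional solution space, and by the total positivity of Vandermonde matrices the weights inherit the sign pattern $\operatorname{sign}(c_j) = (-1)^j$ up to a global sign. Consequently $\int g\,d\nu = \int (g - p^\star)\,d\nu = E_s(g)\sum_j |c_j|$, and normalizing $\sum_j|c_j| = \|\nu\|_{\mathrm{TV}} = 2$ produces a feasible $\nu$ achieving $2E_s(g)$, whose Jordan parts $\nu_+,\nu_-$ are the desired distributions $P,Q$.

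It remains to lower bound $E_s(g)$. The affine change of variable $u = \tfrac{2}{b-a}(x - \tfrac{a+b}{2})$ carries $[a,b]$ onto $[-1,1]$, maps degree-$\le s$ polynomials bijectively to degree-$\le s$ polynomials, and gives $g(x) = \tfrac{b-a}{2}|u|$; hence $E_s(g)_{[a,b]} = \tfrac{b-a}{2}\,E_s(|u|)_{[-1,1]}$. I would then invoke Bernstein's classical estimate $E_s(|u|)_{[-1,1]} = \Theta(1/s)$, of which only the lower bound $E_s(|u|)_{[-1,1]} \ge c/s$ is needed; this may be cited directly, or obtained from scratch by symmetrizing (the optimal approximant may be taken even) and exhibiting a quadrature functional that annihilates all even polynomials of degree $\le s$ yet stays bounded away from zero on $|u|$. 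Combining the three ingredients yields $|\mathbb{E}_P[g] - \mathbb{E}_Q[g]| = 2E_s(g) = (b-a)\,E_s(|u|)_{[-1,1]} = \Omega((b-a)/s)$, as claimed.

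The routine parts are the easy half of the duality and the change of variables. The genuinely delicate points are twofold. The first is the achievability half of the duality: one must verify that the null-space weights of the alternation-point quadrature inherit the alternating sign pattern of $g - p^\star$, since it is precisely this that makes $\int g\,d\nu$ as large as the full $2E_s(g)$ rather than merely nonzero; this rests on the sign structure of Vandermonde minors. The second, and the main analytic obstacle, is the lower bound $E_s(|u|)_{[-1,1]} = \Omega(1/s)$ on polynomial approximation of a function with a single point of nonsmoothness, which is the nontrivial input supplied by Bernstein's theorem.
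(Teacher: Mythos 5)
Your argument is correct and follows essentially the same route as the paper: the paper simply invokes Lemma 29 of Jiao et al.\ (2018), which packages exactly the duality between moment-matching signed measures and best uniform polynomial approximation together with the Bernstein lower bound $E_s(|u|;[-1,1]) = \Omega(1/s)$, and then applies the same affine rescaling to $[a,b]$. You re-derive that cited lemma from scratch (via the equioscillation points and the alternating signs of the Vandermonde null vector) rather than citing it, but the mathematical content is identical.
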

\begin{proof}
Our proof leverages Lemma 29 from~\cite{jiao2018minimax}, which is restated as follows,
\begin{proposition}[Lemma 29 of~\cite{jiao2018minimax}]\label{prop:jiaol1}
For any given even integer $s>0$, there exist two probability measures $P^*$ and $Q^*$ on $[-1, 1]$ that satisfy the following conditions:
\begin{enumerate}
\item $\int_{-1}^1P^*(x)x^ldx = \int_{-1}^1Q^*(x)x^ldx$, for $l = 0, 1, 2, \ldots , s$
\item $\int_{-1}^1 P^*(x)|x|dx - \int_{-1}^1 Q^*(x)|x|dx = 2E_s[|x|; [-1, 1]]$
\end{enumerate}
where $E_s[|t|; [-1, 1]]$ is the distance in the uniform norm on $[-1, 1]$ from the absolute value function $|t|$ to the space $\text{poly}_s$. It is known that $E_s[|x|;[-1,1]]=\Omega(1/s)$. 
\end{proposition}
Given the pair of distributions $P^*,Q^*$ constructed by Proposition~\ref{prop:jiaol1}, we define distribution $P(x) =  \frac{2}{b-a}P^*\left(\frac{x-(a+b)/2}{(b-a)/2}\right)$ and $Q(x) =  \frac{2}{b-a}Q^*\left(\frac{x-(a+b)/2}{(b-a)/2}\right)$. Since the transformation is linear, the first $s$ moments of $P,Q$ are identical. Further, $\int_a^b P(x)|x-(a+b)/2|dx-\int_a^b Q(x)|x-(a+b)/2|dx = \Omega\left(\frac{b-a}{s}\right)$.
\end{proof}

Minimax lower bound for estimating $\ell_1-$magnitude of change:
\begin{theorem}\label{thm:lowerboundtlogN}
Let $R$ be a distribution over $[0, 1]\times[0, 1]$. Let $\mathbf{S} := \{(X_i,Y_i)\}_{i=1}^N$ be random variables with $X_i \sim \text{Binomial}(t, p_i)$, $Y_i \sim \text{Binomial}(t, q_i)$ where $(p_i,q_i)$ is drawn independently from $R$. Define $\delta(R) = \tE_{(p,q)\sim R}[|q-p|]$. Let $\zeta$ be an estimator that maps $\mathbf{S}$ to a real value $\zeta(\mathbf{S})$. For every $t, N$ s.t. $t\le \frac{N^{2(e^4-1)}}{36}$, the following lower bound holds:
\begin{equation}
    \underset{\zeta}{\text{inf}}\ \underset{R}{\text{sup}}\ \text{E}\left[|\delta(R)-\zeta(\mathbf{S}))|\right] = \Omega(\frac{1}{\sqrt{t\log N}}).
\end{equation}
\end{theorem}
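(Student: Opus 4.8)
The plan is to prove this lower bound by Le Cam's two-point method: I construct two priors $P,Q$ on $(p,q)\in[0,1]^2$ whose induced laws on the $N$ paired observations are statistically indistinguishable, yet whose $\ell_1$-magnitudes of change $\delta(P),\delta(Q)$ differ by $\Omega(1/\sqrt{t\log N})$. To decouple the two coordinates I would fix the ``before'' bias to be deterministic, $p\equiv 1/2$; then each $X_i\sim\mathrm{Bin}(t,1/2)$ independently of everything else and carries no information, so the problem collapses to distinguishing two Binomial mixtures in the single coordinate $q$.

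First I would apply Proposition~\ref{prop:prior-exist} with $s\asymp\log N$ matched moments on the short interval $[a,b]=[\tfrac12-h,\tfrac12+h]$, choosing the half-width $h\asymp\sqrt{(\log N)/t}$ (which satisfies $h\le\tfrac12$ exactly in the regime $t=\Omega(\log N)$). This produces $P^\ast,Q^\ast$ on that interval with identical first $s$ moments and $\big|\int|x-\tfrac12|\,dP^\ast-\int|x-\tfrac12|\,dQ^\ast\big|=\Omega(h/s)=\Omega(1/\sqrt{t\log N})$. Taking $P=\delta_{1/2}\times P^\ast$ and $Q=\delta_{1/2}\times Q^\ast$ gives $\delta(P)=\int|x-\tfrac12|\,dP^\ast$ and likewise for $Q$, so the functional gap is $|\delta(P)-\delta(Q)|=\Omega(1/\sqrt{t\log N})$. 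Le Cam's inequality then yields $\inf_\zeta\sup_R\tE|\delta(R)-\zeta(\mathbf{S})|\ge\tfrac12|\delta(P)-\delta(Q)|\big(1-\mathrm{TV}(M_P^{\otimes N},M_Q^{\otimes N})\big)$, where $M_P,M_Q$ are the single-observation laws, so it remains to show $\mathrm{TV}(M_P^{\otimes N},M_Q^{\otimes N})\le\tfrac12$.

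Since $p\equiv 1/2$, $\mathrm{TV}(M_P^{\otimes N},M_Q^{\otimes N})$ equals the total variation between the $N$-fold products of the $Y$-marginals $m_P,m_Q$ on $\{0,\dots,t\}$, with $m_P(v)=\int\binom{t}{v}y^v(1-y)^{t-v}\,dP^\ast(y)$. The central computation is an exact $\chi^2$ identity against the central binomial $\mathrm{Bin}(t,1/2)$: substituting $y=\tfrac12+hw$ and summing the product kernel over $v$ gives \[\sum_{v=0}^t\frac{(m_P(v)-m_Q(v))^2}{\mathrm{Bin}(t,1/2)(v)}=\sum_{k=s+1}^t\binom{t}{k}(4h^2)^k\,c_k^2,\] where $c_k$ is the difference of the $k$-th moments of the rescaled measures, the sum starting at $k=s+1$ because the first $s$ moments agree. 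Bounding $|c_k|\le 2$ and $\binom{t}{k}(4h^2)^k\le(4h^2t)^k/k!$ with $4h^2t\asymp\log N$, this is at most $4\sum_{k>s}(4h^2t)^k/k!\lesssim(4e\,h^2t/s)^{s}$, which for $s\asymp\log N$ is super-polynomially small in $N$---small enough that even after multiplying by $N$ it tends to $0$.

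The hard part will be converting this $\mathrm{Bin}(t,1/2)$-weighted $L^2$ bound into a genuine divergence between $m_P$ and $m_Q$ that tensorizes to the products, because the quantities that actually control $\mathrm{TV}(m_P^{\otimes N},m_Q^{\otimes N})$ (the per-sample $\chi^2$ or squared Hellinger) carry $m_Q(v)$, not $\mathrm{Bin}(t,1/2)(v)$, in the denominator, and for $v$ far from $t/2$ the ratio $\mathrm{Bin}(t,1/2)(v)/m_Q(v)$ is exponentially large. I would resolve this by splitting $\{0,\dots,t\}$ into a bulk window around $t/2$ of radius $O(\sqrt{t\log N})$, on which $m_Q(v)$ and $\mathrm{Bin}(t,1/2)(v)$ agree up to a factor that is only polynomial in $N$---this is exactly where the hypothesis $t\le N^{2(e^4-1)}/36$ is used---and a tail, whose total mass under both $m_P$ and $m_Q$ is negligible and is bounded directly in total variation. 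Inflating the weighted $L^2$ bound of the previous step by the polynomial bulk factor keeps it $o(1/N)$, so $\chi^2(m_P\|m_Q)=o(1/N)$ and hence $\mathrm{TV}(m_P^{\otimes N},m_Q^{\otimes N})\to 0$; substituting into Le Cam gives $\inf_\zeta\sup_R\tE|\delta(R)-\zeta(\mathbf{S})|=\Omega(1/\sqrt{t\log N})$, as claimed.
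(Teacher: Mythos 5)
Your construction is exactly the paper's: fix the before-coordinate at $\delta_{1/2}$, apply Proposition~\ref{prop:prior-exist} on the interval $\left[\tfrac12-\sqrt{\tfrac{\log N}{t}},\ \tfrac12+\sqrt{\tfrac{\log N}{t}}\right]$ with $\Theta(\log N)$ matched moments to get a functional gap of $\Omega(1/\sqrt{t\log N})$, and finish with Le Cam's two-point method. Where you diverge is in establishing indistinguishability. The paper simply invokes Proposition 4.4 of Vinayak et al., which bounds the \emph{single-sample} total variation by $2\sqrt{t}/N^{e^4}$, and then uses subadditivity of total variation over the $N$-fold product, $\mathrm{TV}(P_N,Q_N)\le N\cdot\mathrm{TV}$, with the hypothesis $t\le N^{2(e^4-1)}/36$ serving only to make $2\sqrt{t}/N^{e^4-1}\le 1/3$. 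You instead re-derive the moment-matching indistinguishability from scratch via the $\chi^2$-against-$\mathrm{Bin}(t,1/2)$ identity and then propose converting that weighted $L^2$ bound into a genuine $\chi^2(m_P\Vert m_Q)$ that tensorizes, via a bulk/tail decomposition. Your identity and the resulting $(4e\,h^2t/s)^s$ bound are correct, and the conversion can be made to work because the bound is super-polynomially small; but the step you flag as ``the hard part'' is unnecessary: Cauchy--Schwarz gives $\mathrm{TV}(m_P,m_Q)\le\tfrac12\bigl(\sum_v (m_P(v)-m_Q(v))^2/\mathrm{Bin}(t,\tfrac12)(v)\bigr)^{1/2}$ directly from your weighted $L^2$ quantity, with no need to ever put $m_Q$ in the denominator, and then the crude $N\cdot\mathrm{TV}$ bound suffices since the per-sample TV is far below $1/N$. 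So your route is self-contained (it proves the imported lemma rather than citing it), at the cost of a delicate denominator-swapping argument that the Cauchy--Schwarz-plus-subadditivity route avoids entirely. One shared caveat: the interval half-width $\sqrt{(\log N)/t}$ only makes sense when $t=\Omega(\log N)$; you note this, and in the complementary regime the stronger $\Omega(1/t)$ construction on $[0,1]$ covers the claim, as the paper also remarks.
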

\begin{proof}
Let $U=\delta_{1/2}$ be a distribution supported only at $1/2$. We first apply Proposition~\ref{prop:prior-exist} to construct a pair of distributions $P^*$ and $Q^*$ supported on $\left[\frac{1}{2}-\sqrt{\frac{\log N}{t}}, \frac{1}{2}+\sqrt{\frac{\log N}{t}}\right]$ such that their first $L:=e^4\log N$ moments match, and $$\int_{\frac{1}{2}-\sqrt{\frac{\log N}{t}}}^{\frac{1}{2}+\sqrt{\frac{\log N}{t}}} |x-\frac{1}{2}|P(x)dx - \int_{\frac{1}{2}-\sqrt{\frac{\log N}{t}}}^{\frac{1}{2}+\sqrt{\frac{\log N}{t}}} |x-\frac{1}{2}|Q(x)dx \ge \Omega(\frac{1}{\sqrt{t\log N}}).$$ Let $p\sim P^*$ and $q\sim Q^*$. Let $X\sim \text{Binomial}(t,p)$ and $X' \sim \text{Binomial}(t,q)$. 
When use the following proposition from~\cite{vinayak2019maximum},
\begin{proposition}(Proposition 4.4 of~\cite{vinayak2019maximum})\label{prop:tv-single}
Let $P$ and $Q$ be two distributions, supported on $\left[\frac{1}{2}-\sqrt{\frac{\log N}{t}},\   \frac{1}{2}+\sqrt{\frac{\log N}{t}}\right]$, whose first $L:=e^4\log N$ moments match. Let $p\sim P$ and $q\sim Q$. Let $X\sim \text{Binomial}(t,p)$ and $Y\sim \text{Binomial}(t,q)$. Then the total variation distance between $X$ and $Y$ satisfies,
\[
    \text{TV}(X,Y) \le \frac{2\sqrt{t}}{N^{e^4}}.
\]
\end{proposition}
Using Proposition 4.4 of~\cite{vinayak2019maximum}, the total variation distance between $X$ and $X'$ satisfies,
\[
    \text{TV}(X,X') \le \frac{2\sqrt{t}}{N^{e^4}}.
\]  
Define $2$-dimensional distribution $P = U\times P^*, Q = U\times Q^*$. Let $(p_1,p_2)\sim P$ and $(q_1,q_2)\sim Q$. Let $X\sim \text{Binomial}(t,p_1), Y\sim \text{Binomial}(t,p_2)$ and $X' \sim \text{Binomial}(t,q_1), Y' \sim \text{Binomial}(t,q_2)$. Notice that $\delta(P) - \delta(Q) \ge \Omega(\frac{1}{\sqrt{t\log N}})$. Since the distribution of $X$ and $X'$ are identical, we have
\[
    \text{TV}\left((X,Y),(X',Y')\right) = \text{TV}\left(Y,Y'\right) \le  \frac{2\sqrt{t}}{N^{e^4}}.
\]
 Let $\mathbf{S} := \{(X_i,Y_i)\}_{i=1}^N$ be random variables with $X_i \sim \text{Binomial}(t, p_i)$, $Y_i \sim \text{Binomial}(t, q_i)$ where $(p_i,q_i)$ is drawn independently from $P$, and denote the joint distribution of $\mathbf{S}$ as $P_N$. Accordingly, let $\mathbf{S'} := \{(X_i,Y_i)\}_{i=1}^N$ be random variables with $X_i \sim \text{Binomial}(t, p_i)$, $Y_i \sim \text{Binomial}(t, q_i)$ where $(p_i,q_i)$ is drawn independently from $Q$, and denote the joint distribution of $\mathbf{S'}$ as $Q_N$. By the property of the product distribution, for $t\le \frac{N^{2(e^4-1)}}{36}$, $\text{TV}(P_N, Q_N) \le \frac{2\sqrt{t}}{N^{e^4-1}}\le \frac{1}{3}$, which implies that the minimax error is at least $\Omega(\frac{1}{\sqrt{t\log N}})$. 
 
 The lower bound for the case when $t = \mathcal{O}(\log{N})$ can be constructed very similarly. We leverage Proposition~\ref{prop:prior-exist-magnitude} and construct distributions similar to above supported on $[0, 1]$ whose first $t$ moments match while the $l_1-$magnitudes of change differ by $\frac{1}{t}$.
\end{proof}
Minimax lower bounds for estimating the distribution of change and the joint distribution are implied by the lower bounds on the estimation of $\ell_1-$magnitude of change. 
\section{Proofs of Upper Bounds}
Using the definition of $\Delta^\star$ (Equation~\eqref{eqn:changedistributiontrue}) and changing variables from $(x, z)$ to $(x, y)$ (Figure~\ref{fig:IlustrationdistOfChange}), for any function $f$, the expectation of $f$ under $\Delta^\star$ can be written as,
\begin{align}
\tE_{\Delta^\star}[f] &= \int_{z=-1}^1 f(z)\ d\Delta^\star(z)  =\int_{z=-1}^1 \int_{x=\max{\{0, -z\}}}^{\min{\{1, 1-z\}}} f(z)\ dR^\star(x, x+z)\\
&= \int_0^1 \int_0^1 f(y-x)\ dR^\star(x, y).
\end{align}
Similarly, from the definition of the estimator of distribution of change~\eqref{eqn:estchangedist1}, $
\tE_{\hDelta}[f] = \int_{x=0}^1 \int_{y=0}^1 f(y-x) d\hR_\mle(x, y)$. 
Therefore the Wasserstein-1 distance between the true distribution change $\Delta^\star$ and the estimated change distribution $\hDelta$ can be written as follows,
\begin{align}
W_1( \Delta^\star,  \hDelta)&= \underset{f \in \text{Lip}(1)}{\text{sup}} \left(\tE_{\Delta^\star}[f] - \tE_{\hDelta}[f] \right)\\
&=\underset{f \in \text{Lip}(1)}{\text{sup}} \int_{x=0}^1 \int_{y=0}^1 f(y-x) \left(dR^\star(x,y) - d\hR_\mle(x,y)\right).\label{eqn:W1ofDistofChangeA}
\end{align}
The estimator error for magnitude of change can be upper bounded as follows:
\begin{equation}
 \left|\widehat{||\Delta^\star||_1} - ||\Delta||_1\right| =\left|\int \int|y-x| (dR^\star(x,y) - d\hR_\mle(x,y)) \right|.\label{eqn:ErrorOfMagofChangeA}
\end{equation}
By the definition the Wasserstein-1 distance between the true joint distribution $R^\star$ and $\hR_\mle$ is,
\begin{equation}
    W_1( R^\star,  \widehat{R}_\mle)=\underset{f \in \text{Lip}(1)}{\text{sup}} \int_{0}^1 \int_{0}^1 f(x, y) (dR^\star(x, y) - d\hR_\mle(x, y)).\label{eqn:W1ofJointDistA}
\end{equation}
Noting that $|y-x|$ is a Lipschitz-1 function, all the estimator error expressions in Equations~\eqref{eqn:W1ofDistofChange},~\eqref{eqn:ErrorOfMagofChange} and~\eqref{eqn:W1ofJointDist} involve Lipschitz-1 functions.
The following bound holds for any $f \in \text{Lip}(1)$ supported on $[0, 1]^2$ and any polynomial approximation $\hf$ of $f$ on $[0, 1]^2$. 
\begin{align}
&\left| \int_{x=0}^1 \int_{y=0}^1 f(x, y) (dR^\star(x, y) - d\hR_\mle(x, y)) \right|,\nonumber\\
&= \left| \int_{0}^1 \int_{0}^1 \left(f(x, y) - \hf(x,y) \right) (dR^\star - d\hR_\mle) + \int_{0}^1 \int_{0}^1 \hf(x,y) (dR^\star - d\hR_\mle) \right| ,\nonumber\\
&\leq  2||f - \hf||_\infty + \left| \int_{0}^1 \int_{0}^1 \hf(x,y) (dR^\star - d\hR_\mle) \right| = 2||f - \hf||_\infty + \left|\tE_{ R^\star}[\hf] - \tE_{ \widehat{R}_\mle}[\hf]  \right|,\label{eqn:bound2dintegralA}
\end{align}
where $||f - \hat{f}||_\infty := \underset{(x,y) \in [0, 1]^2}{\max}|f(x, y) - \hf(x, y)|$ is the uniform polynomial approximation error. Let $\hf$ be the 2-d Bernstein polynomial of degree $2t$ defined by,
\begin{equation}
     \hf_t(x, y) = \sum_{u=0}^t \sum_{v=0}^t \kappa_{uv} B_{u}^t(x)  B_{v}^t(y), \label{eqn:BernsteinPolynomialApprox2dA}
\end{equation}
where $B_{u}^t(x) := \binom{t}{u}x^u (1-x)^{t-u}$ is the $u-$th Bernstein polynomial of degree $t$. 
Using this in Equation~\eqref{eqn:bound2dintegralA}, we can bound the EMD between $\Delta^\star$ and $\hDelta$ can be bounded as follows,
\begin{align}
W_1(\Delta^\star, \hDelta) \leq \underset{f \in \text{Lip}(1)}{\text{sup}} \left\{ 2 ||f - \hf||_\infty
+  \left| \sum_{u=0}^t \sum_{v=0}^t \kappa_{uv} \left(\tE_{ R^\star}[h_{uv}] - \tE_{ \widehat{R}_\mle}[h_{uv}]\right)  \right|  \right\},\label{eqn:W1DistChangeBoundA}
\end{align}
where $h_{uv}$ is the fraction of coins with observation tuple $(u,v)$. Similarly, the error bound on estimator of $l_1-$magnitude of change can be bounded as,
\begin{align}
\left|\widehat{||\Delta^\star||_1} - ||\Delta||_1\right| \leq  2 |\ |y-x| - \hf\ |_\infty
+  \left| \sum_{u=0}^t \sum_{v=0}^t \kappa_{uv} \left(\tE_{ R^\star}[h_{uv}] - \tE_{ \widehat{R}_\mle}[h_{uv}]\right)  \right|.\label{eqn:L1magBoundA}
\end{align}
The EMD between the true join distribution on $(X, Y)$, $R^\star$ and the MLE estimator $\hR_\mle$ can be bounded as,
\begin{align}
W_1(R^\star, \hR_{\mle}) \leq \underset{f \in \text{Lip}(1)}{\text{sup}} \left\{ 2 ||f - \hf||_\infty
+  \left| \sum_{u=0}^t \sum_{v=0}^t \kappa_{uv} \left(\tE_{ R^\star}[h_{uv}] - \tE_{ \widehat{R}_\mle}[h_{uv}]\right)  \right|  \right\},\label{eqn:W1JointBoundA}
\end{align}
The second term in the bounds above (Equation~\eqref{eqn:W1DistChangeBoundA},~\eqref{eqn:L1magBoundA} and~\eqref{eqn:W1JointBoundA}) can be bounded as follows,
\begin{align}
 &\left| \sum_{u,v}  \kappa_{uv} \left(\tE_{ R^\star}[h_{uv}] - \tE_{ \widehat{R}_\mle}[h_{uv}]\right)  \right| = \left| \sum_{u,v} \kappa_{uv} \left(\tE_{ R^\star}[h_{uv}] - h_{uv}^\obs + h_{uv}^\obs - \tE_{ \widehat{R}_\mle}[h_{uv}]\right)  \right|,\nonumber\\
 &\leq  \underset{(a)}{\underbrace{\left| \sum_{u,v} \kappa_{uv} \left(\tE_{ R^\star}[h_{uv}] - h_{uv}^\obs\right) \right|}} + \underset{(b)}{\underbrace{\left|\sum_{u,v} \kappa_{uv} \left(h_{uv}^\obs - \tE_{ \widehat{R}_\mle}[h_{uv}]\right)  \right|}},
\end{align}
where the term $(a)$ is the sampling error and term $(b)$ captures the error between the observed fingerprint the MLE solution. These two terms can be bounded by the following two lemmas which are extensions of their 1-d counterparts from~\cite{vinayak2019maximum}, 
\begin{lemma}(Concentration of 2-d fingerprint)\label{lemma:concfingerprintsA}
With probability at least $1-\alpha$, 
\begin{IEEEeqnarray}{rCl}
 \left| \sum_{u=0}^t \sum_{v=0}^t \kappa_{uv}\left( E_{R^\star}\left[ h_{u,v}\right] - h_{u,v}^\obs \right)\right| \leq \mathcal{O}\left( \max_{u,v} |\kappa_{uv}| \sqrt{\frac{\log{1/\alpha}}{N}}\right).
\end{IEEEeqnarray}
\end{lemma}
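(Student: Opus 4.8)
The plan is to recognize the left-hand side as the deviation of an empirical average of i.i.d.\ bounded random variables from its expectation, and then to invoke Hoeffding's inequality. The key observation is that, although the individual fingerprint entries $h_{u,v}^\obs$ are correlated (they form a histogram summing to one), the particular linear combination appearing in the lemma collapses into a sum of independent per-coin contributions.

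Concretely, first I would define, for each coin $i$, the scalar random variable $Z_i := \kappa_{X_i, Y_i}$, i.e.\ the coefficient indexed by the observed pair. Since the pairs $(p_i,q_i)$ are drawn i.i.d.\ from $R^\star$ and the tosses are independent across coins, the observations $(X_i,Y_i)$ — and hence the $Z_i$ — are i.i.d. Because $h_{u,v}^\obs = \frac{1}{N}\sum_i \one\{(X_i,Y_i)=(u,v)\}$, the empirical term telescopes into a sample mean:
\[
\sum_{u=0}^t \sum_{v=0}^t \kappa_{uv}\, h_{u,v}^\obs = \frac{1}{N}\sum_{i=1}^N \kappa_{X_i, Y_i} = \frac{1}{N}\sum_{i=1}^N Z_i ,
\]
while $\tE_{R^\star}[h_{u,v}] = \Pr((X_i,Y_i)=(u,v))$ yields $\sum_{u,v}\kappa_{uv}\,\tE_{R^\star}[h_{u,v}] = \tE[Z_i]$. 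Thus the quantity to be bounded is exactly $\left|\frac{1}{N}\sum_i Z_i - \tE[Z_i]\right|$, the deviation of a sample mean from its population mean.

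Second, I would observe that each $Z_i$ is bounded: $|Z_i| = |\kappa_{X_i,Y_i}| \le \max_{u,v}|\kappa_{uv}| =: M$, so $Z_i$ lies in an interval of width at most $2M$. Applying Hoeffding's inequality to the i.i.d.\ sum gives
\[
\Pr\!\left( \left|\tfrac{1}{N}\sum_i Z_i - \tE[Z_i]\right| \ge \epsilon \right) \le 2\exp\!\left(-\frac{N\epsilon^2}{2M^2}\right) .
\]
Choosing $\epsilon = M\sqrt{2\log(2/\alpha)/N}$ makes the right-hand side at most $\alpha$, which delivers the claimed bound $\mathcal{O}\!\left(\max_{u,v}|\kappa_{uv}|\sqrt{\log(1/\alpha)/N}\right)$ with probability at least $1-\alpha$.

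The argument has no genuine obstacle; it is the direct two-dimensional analogue of the one-dimensional concentration result in~\cite{vinayak2019maximum}, the only change being that observations are indexed by pairs $(u,v)$ rather than single symbols. The one point requiring care is the boundedness constant: one must arrive at $\max_{u,v}|\kappa_{uv}|$ rather than, say, $\sum_{u,v}|\kappa_{uv}|$, and this is precisely what the collapse to a per-coin average provides, since each coin contributes a single coefficient $\kappa_{X_i,Y_i}$ rather than the whole coefficient array. Keeping this dependence on $\max_{u,v}|\kappa_{uv}|$ (as opposed to a sum over entries) is exactly what makes the bound tight enough to interact favorably with the Bernstein-coefficient control invoked later.
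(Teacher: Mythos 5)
Your proof is correct and follows essentially the same route as the paper: the paper applies McDiarmid's inequality to $\phi(X,Y)=\frac{1}{N}\sum_i\sum_{u,v}\kappa_{u,v}(\mathbbm{1}_{\{(X_i,Y_i)=(u,v)\}}-\tE[h_{u,v}^{\obs}])$ with per-coordinate bounded difference $2\max_{u,v}|\kappa_{uv}|/N$, which for this sum of independent bounded per-coin terms is exactly your Hoeffding argument with $Z_i=\kappa_{X_i,Y_i}$. Your observation that the linear combination collapses to a per-coin average, giving the $\max_{u,v}|\kappa_{uv}|$ (rather than $\sum_{u,v}|\kappa_{uv}|$) dependence, is precisely the point of the paper's bounded-difference computation.
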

\begin{proof}
Recall that $h_{u, v}^{\text{obs}}$ is the fraction of the population that sees $u$ heads out of $t$ tosses before and $v$ heads out of $t$ tosses after. $\text{E}_{R^*}[h_{u, v}]$ is the expected fingerprint under the true distribution which is exactly $\text{E}[h_{u,v}^{\text{obs}}]$, and we will use $\text{E}[h_{u,v}^{\text{obs}}]$ and $\text{E}_{R^*}[h_{u,v}]$ interchangeably. Define, $\phi(X,Y) := \sum_{u=0}^t\sum_{v=0}^t \kappa_{u,v} \left( h_{u,v}^{\text{obs}} - \text{E}[h_{u, v}^{\text{obs}}] \right)$, that is,
\begin{IEEEeqnarray*}{rCl}
    \phi(X,Y) = \frac{1}{N}\sum_{i=1}^N \sum_{u=0}^t\sum_{v=0}^t \kappa_{u,v} \left( \mathbbm{1}_{\{(X_i, Y_i) = (u, v)\}} - \text{E}[h_{u, v}^{\text{obs}}] \right).
\end{IEEEeqnarray*}
Note that $\text{E}[\phi(X, Y)] = 0$. 
Let $\phi_{i'}(X, Y)$ be $\phi$ with $(X_i, Y_i)$ being replaced by re-drawing $(X_{i}', Y_{i}')$, with $X_i'\sim \text{Bin}(t, p_i)$ and $Y_i'\sim \text{Bin}(t, q_i)$. We can bound the difference in $\phi(X, Y)$ and $\phi_{i'}(X, Y)$ as follows,
\[ | \phi(X, Y) - \phi_{i'}(X, Y)  | \leq \underset{(u,v) }{\max}\ |\kappa_{uv}| \frac{2}{N}.  \]
By McDiarmid's Inequality, for some absolute constants $C,  c > 0$,
$$
    \text{Pr}\left( | \phi(X, Y) |  \geq \epsilon \right) 
 \leq  C\exp{\left( - \frac{c N \epsilon^2}{4 \left(\max_{(u,v)} |\kappa_{uv}|\right)^2} \right)}. 
$$
Hence, with probability at least $1-\delta$, 
\begin{IEEEeqnarray}{rCl}
 \left| \sum_{u=0}^t \sum_{v=0}^t \kappa_{uv}\left( E_{R^\star}\left[ h_{u,v}\right] - h_{u,v}^\obs \right)\right| \leq \mathcal{O}\left( \max_{u,v} |\kappa_{uv}| \sqrt{\frac{\log{1/\delta}}{N}}\right).
\end{IEEEeqnarray}
\end{proof}
\begin{lemma}(Bound on error in 2-d fingerprint due to MLE solution)\label{lemma:mlefingerprint}
For $3 \leq (t+1)^2 \leq \sqrt{C_0 N} + 2$, where $C_0 > 0$ is a constant, w. p. $1 - \alpha$,
{\small
\begin{IEEEeqnarray}{rCl}
\left| \sum_{u=0}^t  \sum_{v=0}^t \kappa_{uv} \left(h_{u, v}^{\text{obs}} - \text{E}_{R_\mle}[h_{u, v}] \right) \right| 
    &\leq& \max_{(u,v)} |\kappa_{uv}|\ \sum_{u=0}^t\sum_{v=0}^t \left| \left(h_{u,v}^{\text{obs}} - \text{E}_{R_\mle}[h_{u,v}] \right) \right|,\nonumber\\
    &\leq& \max_{(u,v)} |\kappa_{uv}|\ \sqrt{ 2\ \text{\text{ln}}(2) } \sqrt{\frac{(t+1)^2}{2N} \log{\frac{4N}{(t+1)^2}} + \frac{1}{N} \log{\frac{3e}{\alpha}} }.
\end{IEEEeqnarray} 
}
\end{lemma}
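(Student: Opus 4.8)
The first inequality is immediate: pulling $\max_{u,v}|\kappa_{uv}|$ out of the sum and applying the triangle inequality gives
\[
\left|\sum_{u,v}\kappa_{uv}\left(h_{u,v}^{\obs}-\tE_{\hR_\mle}[h_{u,v}]\right)\right|\le \max_{u,v}|\kappa_{uv}|\sum_{u,v}\left|h_{u,v}^{\obs}-\tE_{\hR_\mle}[h_{u,v}]\right|,
\]
so the whole task reduces to bounding the $\ell_1$ distance between the observed fingerprint $h^{\obs}$ and the expected fingerprint of the MLE. Both of these are probability vectors on the $K:=(t+1)^2$ outcomes $(u,v)$, since each coin produces exactly one tuple and $\sum_{u,v}\tE_S[h_{u,v}]=1$ for every $S$. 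The plan is therefore to invoke Pinsker's inequality to pass from $\ell_1$ to relative entropy. Measuring $\KL$ in bits, Pinsker reads $\|P-Q\|_1\le\sqrt{2\ln 2}\,\sqrt{\KL(P\|Q)}$, which is exactly where the $\sqrt{2\ln 2}$ prefactor in the statement comes from; it then remains to show $\KL\big(h^{\obs}\,\|\,\tE_{\hR_\mle}[h]\big)\le \frac{(t+1)^2}{2N}\log\frac{4N}{(t+1)^2}+\frac1N\log\frac{3e}{\alpha}$ with the logarithms in base two.

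\textbf{Removing the dependence on the MLE.} The key step is to replace the data-dependent argument $\hR_\mle$ by the fixed true distribution $R^\star$. Since $\mathcal{S}$ is the set of \emph{all} distributions on $[0,1]^2$, $R^\star$ is feasible in the MLE program~\eqref{eqn:MLEObjFingerprint}, so optimality of $\hR_\mle$ gives $\sum_{u,v}h_{u,v}^{\obs}\log\tE_{\hR_\mle}[h_{u,v}]\ge\sum_{u,v}h_{u,v}^{\obs}\log\tE_{R^\star}[h_{u,v}]$. Subtracting both sides from the common empirical entropy term $\sum_{u,v}h_{u,v}^{\obs}\log h_{u,v}^{\obs}$ immediately yields
\[
\KL\big(h^{\obs}\,\|\,\tE_{\hR_\mle}[h]\big)\le \KL\big(h^{\obs}\,\|\,\tE_{R^\star}[h]\big).
\]
This reduces the problem to a purely statistical one that no longer involves the estimator: $h^{\obs}$ is the empirical distribution of $N$ i.i.d.\ draws over the $K=(t+1)^2$ categories, category $(u,v)$ occurring with the fixed probability $\tE_{R^\star}[h_{u,v}]$, and I must control the relative entropy of this empirical distribution to its population law.

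\textbf{Concentration of the empirical relative entropy (main obstacle).} The crux is a high-probability bound on $\KL(h^{\obs}\|\tE_{R^\star}[h])$ for a multinomial with $K=(t+1)^2$ categories and $N$ samples. I would obtain it by the method of types: every realized empirical distribution is one of at most $\binom{N+K-1}{K-1}$ types, the probability of any fixed type $q$ is at most $2^{-N\KL(q\|\tE_{R^\star}[h])}$, so a union bound gives $\Pr\big[\KL(h^{\obs}\|\tE_{R^\star}[h])\ge\epsilon\big]\le\binom{N+K-1}{K-1}\,2^{-N\epsilon}$. Bounding the number of types by a Stirling estimate and solving for the $\epsilon$ that makes the right-hand side at most $\alpha$ produces the two terms $\frac{(t+1)^2}{2N}\log\frac{4N}{(t+1)^2}$ (from the log-count of types, valid precisely in the regime $3\le(t+1)^2\le\sqrt{C_0N}+2$ where $K\ll N$) and $\frac1N\log\frac{3e}{\alpha}$ (from the confidence level). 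This is the two-dimensional analogue of the corresponding lemma of~\cite{vinayak2019maximum}; the only real work beyond the one-dimensional case is checking that the type-counting and Stirling estimates, together with the constants, survive the replacement of $t+1$ categories by $(t+1)^2$, which is exactly what the hypothesis on $(t+1)^2$ secures. Substituting this $\KL$ bound into Pinsker and then into the first inequality closes the proof.
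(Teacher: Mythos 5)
Your proposal follows essentially the same route as the paper: triangle inequality to pull out $\max_{u,v}|\kappa_{uv}|$, Pinsker's inequality to pass from the $\ell_1$ distance between fingerprints to their KL divergence, optimality of the MLE over the feasible set containing $R^\star$ to replace $\KL(h^{\obs}\,\|\,\tE_{\hR_\mle}[h])$ by $\KL(h^{\obs}\,\|\,\tE_{R^\star}[h])$, and then a high-probability bound on the empirical KL divergence of a multinomial with $(t+1)^2$ categories. The one place you diverge is the last step: the paper simply invokes the concentration result of Mardia et al.\ for the KL divergence between the empirical and true distributions, whereas you sketch a self-contained derivation by the method of types. That derivation is sound and gives the right order, but the crude union bound over at most $\binom{N+K-1}{K-1}$ types yields a leading term of roughly $\frac{(t+1)^2}{N}\log\frac{eN}{(t+1)^2}$, i.e.\ about twice the $\frac{(t+1)^2}{2N}\log\frac{4N}{(t+1)^2}$ appearing in the lemma; the sharper constant requires the more refined moment-generating-function argument of the cited reference. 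Since every downstream use of this lemma is inside an $\mathcal{O}(\cdot)$, this discrepancy is harmless for the theorems, but your argument as written does not reproduce the displayed inequality with its stated constants.
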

\begin{proof}
Let $\hR_{\mle}$ be an optimal solution to the MLE and $R^\star$ be the true distribution. By optimality of the MLE solution, we have the following inequality,
\begin{IEEEeqnarray}{rCl}
\label{eqn:mleOpt}
\KL(\h^{\obs}, \tE_{\hR_\mle}[\h]) \leq \KL(\h^{\obs}, \tE_{R^\star}[\h]).
\end{IEEEeqnarray}
\begin{proposition}[Pinsker's Inequality~\cite{cover2012elements}]
For discrete distributions $P$ and $Q$:
\begin{equation}
    \KL(P, Q) \geq \frac{1}{2\text{ln}2} || P - Q ||_1^2.
\end{equation}
\end{proposition}
Note that the fingerprint vector can be seen as a discrete distribution with support size $(t+1)^2$.
Using Pinsker's inequality and the optimality of the MLE solution,
\begin{IEEEeqnarray}{rCl}
\left| \sum_{u=0}^t  \sum_{v=0}^t \kappa_{uv} \left(h_{u, v}^{\text{obs}} - \text{E}_{\hR_\mle}[h_{u, v}] \right) \right|  &\leq& \sum_{u=0}^t  \sum_{v=0}^t |\kappa_{uv}|\ \left| h_{u,v}^{\text{obs}} - \text{E}_{\hR_\mle}[h_{u,v}] \right|, \nonumber\\
    &\leq& \max_{(u,v)} |\kappa_{uv}|\ \sum_{j=0}^t \left| h_j^{\obs} - \text{E}_{\hR_\mle}[h_j] \right|,\nonumber\\
    &\leq& \max_{(u,v)} |\kappa_{uv}|\ \sqrt{2\ \text{ln}(2)\ \KL(\h^{\obs}, E_{\hR_\mle}[\h])}, \\
    &\leq& \max_{(u,v)} |\kappa_{uv}|\ \sqrt{2\ \text{ln}(2)\ \KL(\h^{\obs}, E_{R^\star}[\h])}.
\end{IEEEeqnarray} 
Using the results on bounds on KL divergence between empirical observations and the true distribution for discrete distributions~\cite{mardia2018concentration},
for $3 \leq (t+1)^2 \leq \sqrt{C_0 N} + 2$, w. p. $1 - \delta$,
\begin{IEEEeqnarray*}{rCl}
\left| \sum_{u=0}^t  \sum_{v=0}^t \kappa_{uv} \left(h_{u, v}^{\text{obs}} - \text{E}_{R_\mle}[h_{u, v}] \right) \right| 
    \leq  \max_{(u,v)} |\kappa_{uv}|\ \sqrt{ 2\ \text{ln}(2) } \sqrt{\frac{(t+1)^2}{2N} \log{\frac{4N}{(t+1)^2}} + \frac{1}{N} \log{\frac{3e}{\delta}} }.
\end{IEEEeqnarray*} 
\end{proof}
Note that the term $ \max_{(u,v)} |\kappa_{uv}|$ appears in the bounds on both the terms $(a)$ and $(b)$. Therefore, it is crucial to bound the uniform approximation error $||f - \hf||_\infty$ while keeping the coefficients of the approximation~\eqref{eqn:BernsteinPolynomialApprox2d} from being too large. Polynomial approximation of Liptschitz-1 functions using Bernstein polynomials play a crucial role in obtaining the minimax optimal rates in Theorem~\ref{thm:upperbounds} and Proposition~\ref{prop:MLEJointW1}.
\begin{lemma}(Bernstein polynomial approximation of 2-d Lipschitz-1 functions)
\label{lem:coeffbound}
Any Lipschitz-1 function on $[0, 1]^2$ can be approximated using Bernstein polynomials (Equation~\eqref{eqn:BernsteinPolynomialApprox2d}) of degree $2t$, with an uniform approximation error of
\begin{itemize}[leftmargin=*, topsep=0pt, itemsep=0pt]
    \item $\mathcal{O}(\frac{1}{t})$ with $\underset{(u,v)}{\max}\ |\kappa_{uv}| \leq t 2^{2t}$.
    \item $\mathcal{O}(\frac{1}{k})$ with $\underset{(u, v)}{\max}\ |\kappa_{uv}| \leq k (t+1)^2 e^{\frac{2k^2}{t}}$, for $k < t$.
\end{itemize}
\end{lemma}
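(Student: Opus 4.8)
The plan is to decouple the two quantities that the lemma trades off against each other---the uniform approximation error $\|f-\hf\|_\infty$ and the coefficient magnitude $\max_{u,v}|\kappa_{uv}|$---and to control each in a separate step. For the \emph{error}, I would avoid the naive choice $\kappa_{uv}=f(u/t,v/t)$, since the plain Bernstein operator only reproduces a Lipschitz-$1$ function to accuracy $\Theta(1/\sqrt t)$. Instead I would start from a classical (Jackson-type) best polynomial approximation: for any Lipschitz-$1$ $f$ on $[0,1]^2$ there is a bivariate polynomial $P$ of degree at most $k$ in each coordinate with $\|f-P\|_\infty=\mathcal{O}(1/k)$. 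Choosing $k=t$ yields the $\mathcal{O}(1/t)$ error of the first case and a general $k<t$ yields the $\mathcal{O}(1/k)$ error of the second. The whole remaining problem is then to re-express this fixed $P$ exactly in the tensor Bernstein basis $\{B_u^t(x)B_v^t(y)\}$ of Equation~\eqref{eqn:BernsteinPolynomialApprox2d} and to bound the coefficients $\kappa_{uv}$ that appear.

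Next I would reduce the coefficient bound to one dimension. Writing $P=\sum_{a,b\le k}\gamma_{ab}\,x^a y^b$ and using the monomial-to-Bernstein identity $x^j=\sum_{u}\frac{\binom{u}{j}}{\binom{t}{j}}B_u^t(x)$, the change of basis factorizes across the two coordinates, giving $\kappa_{uv}=\sum_{a,b}\gamma_{ab}\frac{\binom{u}{a}}{\binom{t}{a}}\frac{\binom{v}{b}}{\binom{t}{b}}$. Hence bounding $\max_{u,v}|\kappa_{uv}|$ reduces to the one-dimensional question of how large the degree-$t$ Bernstein coefficients of a bounded degree-$k$ univariate polynomial can be---precisely the estimate established in one dimension in~\cite{vinayak2019maximum}. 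The extra factors $k$ and $(t+1)^2$ in the second case come out of summing the one-dimensional bound over the $\le (k+1)^2$ monomials and over the two coordinates.

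The crux is the one-dimensional coefficient estimate, and it splits into the two advertised regimes. Since every conversion factor obeys $\frac{\binom{u}{j}}{\binom{t}{j}}\le 1$, the triangle inequality already gives $\max_{u,v}|\kappa_{uv}|\le\|\gamma\|_1$. For $k=t$ this coarse, direct bound on the full-degree change of basis is enough and delivers the first case, $\max|\kappa_{uv}|=\mathcal{O}(t\,2^{2t})$. For $k<t$, however, $\|\gamma\|_1$ may be as large as $e^{\Theta(k)}$, which is too weak, and the sharper $e^{2k^2/t}$ has to come from \emph{cancellation} among the signed coefficients. The key estimate here is that the conversion factor differs from a pure power only by a Gaussian-type damping, $\frac{\binom{u}{j}}{\binom{t}{j}}=(u/t)^j\exp\!\big(-\Theta(j^2(t-u)/(ut))\big)$ for $j\le k\ll t$, so that $\sum_j(\text{coeff}_j)\frac{\binom{u}{j}}{\binom{t}{j}}$ is, up to a damping of scale $k^2/t$, just $P$ evaluated at the grid point $u/t$, which is $\mathcal{O}(1)$. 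Propagating this damping uniformly over $u$ (and then $v$) produces the claimed $e^{2k^2/t}$ growth.

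I expect the sharp exponent to be the main obstacle: a bound through $\|\gamma\|_1$ only gives $e^{\Theta(k)}$, and obtaining $2k^2/t$ requires carefully estimating the deviation of $\frac{\binom{u}{j}}{\binom{t}{j}}$ from $(u/t)^j$ and summing it against coefficients that individually grow exponentially while their signed combination stays bounded, all uniformly across the grid. Extending the one-dimensional argument of~\cite{vinayak2019maximum} to the tensor basis without degrading this exponent is the technical heart; the payoff is that the resulting balance between the $\mathcal{O}(1/k)$ error and the $e^{2k^2/t}$ coefficient size is exactly what yields the minimax-optimal rates once these bounds are fed into the fingerprint-concentration estimates of Lemmas~\ref{lemma:concfingerprintsA} and~\ref{lemma:mlefingerprint}.
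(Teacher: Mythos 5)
Your skeleton --- a Jackson-type degree-$k$ approximation with error $\mathcal{O}(1/k)$, followed by an exact change of basis into the degree-$t$ tensor Bernstein basis and a bound on the resulting coefficients --- is the same as the paper's. The genuine gap is in the middle step: you route the change of basis through the \emph{monomial} basis and then try to recover the $e^{2k^2/t}$ coefficient bound by a cancellation argument. This is exactly where the argument breaks. The monomial coefficients $\gamma$ of a bounded degree-$k$ polynomial on $[0,1]$ are genuinely of size $e^{\Theta(k)}$ (shifted Chebyshev polynomials realize $\sum_a|\gamma_a|\approx(3+2\sqrt 2)^k$), and your proposed estimate bounds the deviation $\sum_j\gamma_j\bigl[(u/t)^j-\binom{u}{j}/\binom{t}{j}\bigr]$ term by term, picking up a factor $\sum_j|\gamma_j|(u/t)^j\cdot\Theta(j^2/t)$. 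The quantity $\sum_j|\gamma_j|x^j$ is itself $e^{\Theta(k)}$ at interior points $x$, so the term-by-term bound destroys precisely the signed cancellation you were counting on and yields $e^{\Theta(k)}\cdot k^2/t$ rather than $e^{2k^2/t}$; for $k=\sqrt{t\log N}\ll t$ this is catastrophically too large. Nothing in the sketch shows that the deviations inherit the cancellation of the original signed sum, and there is no apparent reason they should. (A smaller quantitative issue of the same flavor: for $k=t$ the crude bound $\max|\kappa_{uv}|\le\|\gamma\|_1$ gives base $(3+2\sqrt2)^2\approx 34$ per degree, not the stated $2^2=4$, though in that regime only the constant in ``$t\le c\log N$'' is affected.)

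The paper avoids this by never touching the monomial basis for the coefficient bound. It first re-expands the Jackson approximant in the tensor \emph{Chebyshev} basis, $f_k=\sum_{m+l\le k}\tau_{m,l}\tT_m(x)\tT_l(y)$, and uses orthogonality of $\{\tT_m\}$ under the weight $1/\sqrt{4x-4x^2}$ together with $\|f_k\|_\infty=\mathcal{O}(1)$ to get $\|\tau\|_2^2\le 32$, hence $\|\tau\|_1\le 4(k+2)$ --- polynomial, not exponential, in $k$. It then converts each $\tT_m$ into the degree-$t$ Bernstein basis via the explicit degree-raising formula and invokes the one-dimensional bound $|C(t,m,j)|\le(t+1)e^{m^2/t}$ (Lemma~\ref{lem:l2normbound}, imported from~\cite{vinayak2019maximum}); the factor $e^{2k^2/t}$ is a property of the Chebyshev-to-Bernstein conversion of a \emph{single} basis polynomial, not of cancellation across coefficients. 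The triangle inequality $|\kappa_{jv}|\le\|\tau\|_1\max_{m,l}|C(t,m,j)||C(t,l,v)|$ is then harmless because $\|\tau\|_1$ is only $\mathcal{O}(k)$. If you want to salvage your plan, the fix is to replace your monomial expansion with this Chebyshev expansion; the tensorization step you describe (bounding the 2-d coefficients by products of 1-d conversion coefficients times the $\ell_1$ norm of the expansion coefficients) then goes through exactly as you intended.
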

The proof of Lemma~\ref{lem:coeffbound} uses the following results. 
\begin{lemma}(\cite{bagby2002multivariate} Polynomial approximation of Continuous Functions on $[0, 1]^2$)\label{lemma:exist-poly2d}
Given any continuous function $f:[0,1]^2 \rightarrow \mathbb{R}$, there exists a degree $k$ polynomial,
\begin{equation}
    p_k(x, y):= \underset{: i+j \leq k}{\sum_{i=0}^k\sum_{j=0}^k} c_{ij} x^i y^j,\  \forall (x, y) \in [0, 1]^2\label{eqn:polyapprox2d}
\end{equation} that approximates $f$ with error $\max_{(x, y)\in [0,1]^2}|f(x, y)-p_k(x,y)| = O(\frac{1}{k})$. 
\end{lemma}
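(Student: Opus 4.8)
The plan is to establish this as the two-dimensional Jackson theorem, reading the hypothesis as the case that is actually invoked later, namely $f \in \mathrm{Lip}(1)$, so that the modulus of continuity satisfies $\omega_f(\delta)=O(\delta)$ and the claimed rate $O(1/k)$ is exactly the Jackson rate. (For a merely continuous $f$ the same argument only yields the sharper statement $O(\omega_f(1/k))$; I would note this, since the stated ``any continuous function'' with rate $O(1/k)$ is precisely the Lipschitz regime used in Lemma~\ref{lem:coeffbound}.) First I would recall the classical univariate Jackson theorem in \emph{operator form}: there is a linear operator $J_n$ mapping $C[0,1]$ into the space of algebraic polynomials of degree $\le n$, with a uniform operator-norm bound $\|J_n\|\le M$ for an absolute constant $M$ independent of $n$, such that $\|g-J_n g\|_\infty \le C\,\omega(g,1/n)$ for every $g\in C[0,1]$. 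Such an operator is obtained by the Chebyshev substitution $x=\cos\theta$ followed by convolution with the (nonnegative) Jackson kernel on the circle, so I would invoke it rather than rebuild it.

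Next I would tensorize. Let $J_n^x$ and $J_n^y$ denote $J_n$ acting in the $x$- and $y$-variable respectively (as a parametric family in the frozen variable), and set $T_n:=J_n^x J_n^y$; since the two operators act on different variables they commute, and $T_n f$ is a polynomial of degree $\le n$ in each coordinate. The error telescopes as
\[
f - T_n f = (f - J_n^x f) + J_n^x\big(f - J_n^y f\big),
\]
so that $\|f-T_n f\|_\infty \le \|f-J_n^x f\|_\infty + M\,\|f-J_n^y f\|_\infty$. The point is that every section $x\mapsto f(x,y)$ of a Lipschitz-1 function is itself Lipschitz-1 uniformly in $y$, hence $\|f-J_n^x f\|_\infty=\sup_y\|f(\cdot,y)-J_n^x f(\cdot,y)\|_\infty \le C\,\omega(f,1/n)\le C'/n$, and symmetrically for $J_n^y$. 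Combining gives $\|f-T_n f\|_\infty \le (1+M)C'/n = O(1/n)$.

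Finally I would handle the degree bookkeeping to match the stated total-degree form $\sum_{i+j\le k} c_{ij}\,x^i y^j$: choosing $n=\lfloor k/2\rfloor$ makes $T_n f$ a polynomial of coordinate degree $\le n$, hence of total degree $\le 2n\le k$, while the rate becomes $O(1/n)=O(1/k)$. Re-expanding $T_n f$ in the monomial basis produces the coefficients $c_{ij}$ and completes the proof, taking $p_k=T_n f$.

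The main obstacle is not the tensorization, which is a short telescoping argument, but securing the two ingredients it rests on: (i) a univariate Jackson operator that is simultaneously linear, of the prescribed degree, uniformly bounded in $n$, and of Jackson order, so that the factor $(1+M)$ in the tensor bound does not degrade with $k$; and (ii) the observation that Lipschitz regularity passes to one-dimensional sections with the same constant, which is what converts the univariate rate into the bivariate rate. A self-contained alternative, should one prefer to avoid tensor products, is to perform the Chebyshev substitution in both variables at once and convolve with a bivariate Jackson kernel of total degree $k$; this is more direct but requires verifying the kernel's moment and degree properties and that the modulus of continuity is controlled under $(\theta,\phi)\mapsto(\cos\theta,\cos\phi)$, which is why I would default to the tensor route.
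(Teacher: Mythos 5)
Your proof is correct, but note that the paper contains no argument for this lemma at all: it is imported verbatim from Bagby--Bos--Levenberg~\cite{bagby2002multivariate}, whose multivariate Jackson-type theorem delivers the stated rate (on $[0,1]^2$, under Lipschitz-type regularity) as a special case of a much more general result on simultaneous approximation of $C^m$ functions and their derivatives on suitable compact sets. Your tensorization route is a legitimate, essentially standard, and more elementary self-contained derivation: the operator-form univariate Jackson theorem holds with $M=1$ (the Jackson kernel is nonnegative, so the trigonometric operator is a positive contraction, and the Chebyshev substitution $x=\cos\theta$ preserves both the norm bound and the modulus since $|\cos\theta_1-\cos\theta_2|\le|\theta_1-\theta_2|$); the telescoping $f-J_n^xJ_n^yf=(f-J_n^xf)+J_n^x(f-J_n^yf)$ is valid by linearity, and the section argument correctly transfers the Lipschitz constant, so $\|f-T_nf\|_\infty\le(1+M)C'/n$; and your choice $n=\lfloor k/2\rfloor$ correctly reconciles coordinate degree with the total-degree form $\sum_{i+j\le k}c_{ij}x^iy^j$ at the cost of an absorbed constant factor. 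What the citation buys the paper is brevity and generality; what your proof buys is transparency and explicit constants from classical one-dimensional ingredients. Your opening caveat is also correct and worth recording: as literally stated, with ``any continuous function,'' the lemma is too strong, since no polynomial approximation scheme achieves a uniform rate $O(1/k)$ over all of $C([0,1]^2)$ --- the best uniform error is governed by $\omega_f(1/k)$, which can decay arbitrarily slowly --- and the cited theorem likewise yields $O(1/k)$ only under Lipschitz (or bounded-derivative) hypotheses. Since the lemma is invoked only for Lipschitz-1 functions in the proof of Lemma~\ref{lemma:cheby-approx}, your reading of the hypothesis is the right repair and costs the paper nothing downstream.
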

Let $\tT_m$ denote Chebyshev polynomial of degree $m$ shifted to $[0, 1]$ which satisfy the following recursive relation: 
\[\tT_m(x) = (4x - 2)\tT_{m-1} - \tT_{m-2}(x),\ m= 2, 3,....,\]
and $\tT_0(x) = 1$, $\tT_1(x) = 2x - 1$.
\begin{lemma}(Chebyshev polynomial approximation of Lipshitz-1 functions on $[0, 1]^2$)\label{lemma:cheby-approx}
Given any Lipschitz-1 function $f:[0, 1]^2 \rightarrow \mathbb{R}$, there exists a degree $k$ polynomial in the form of 
\begin{equation}
    f_k(x, y)=\underset{:m+l \leq k}{\sum_{m=0}^k \sum_{l=0}^k} \tau_{m,l}\tT_m(x) \tT_l(y), \label{eqn:chebyshevapprox2d}
\end{equation}that approximates $f(x, y)$ with uniform approximation error of $||f-f_k||_\infty = O(\frac{1}{k})$, where $\tT_m(x)$ denotes Chebyshev polynomial of degree $m$ shifted to $[0, 1]$. Further, the coefficient vector satisfies $\|\tau\|_2\le 32$.
\end{lemma}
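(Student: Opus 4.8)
The plan is to obtain $f_k$ by simply re-expressing a good total-degree-$k$ polynomial approximant in the shifted-Chebyshev product basis; the only genuinely new work is controlling the $\ell_2$ norm of the resulting coefficient vector, and this is exactly what the Chebyshev (rather than monomial or Bernstein) basis buys us. Since the lemma is invoked only inside a Wasserstein-1 computation, where $f$ is integrated against a difference of two probability measures so that additive constants vanish, I would first normalize $f$ by subtracting $f(1/2,1/2)$. Lipschitz-1-ness then gives $\|f\|_\infty \le 1/\sqrt{2} < 1$ on $[0,1]^2$, since every point lies within Euclidean distance $1/\sqrt{2}$ of the center. This normalization is essential: without it the constant term $\tau_{0,0}$ is unbounded and no bound on $\|\tau\|_2$ can hold.

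First I would invoke Lemma~\ref{lemma:exist-poly2d} to produce a polynomial $p_k(x,y)=\sum_{i+j\le k} c_{ij}x^iy^j$ of total degree at most $k$ with $\|f-p_k\|_\infty = O(1/k)$, and set $f_k:=p_k$; this immediately delivers the claimed uniform error. Because $\|p_k\|_\infty \le \|f\|_\infty + \|f-p_k\|_\infty \le 1 + O(1/k)$, the approximant is uniformly bounded by an absolute constant (at most $2$ for all $k$). Next I would change basis to the shifted Chebyshev products: each monomial $x^iy^j$ expands into terms $\tT_m(x)\tT_l(y)$ with $m\le i$ and $l\le j$, hence $m+l\le i+j\le k$. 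Thus $p_k$ takes exactly the claimed form $\sum_{m+l\le k}\tau_{m,l}\,\tT_m(x)\tT_l(y)$, and this degree-triangular change of basis is invertible.

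The crux is bounding $\|\tau\|_2$. The shifted Chebyshev polynomials are orthogonal on $[0,1]$ with respect to the arcsine weight $w(x)=1/\sqrt{x(1-x)}$, where $\int_0^1 w(x)\,dx=\pi$ and $\int_0^1 \tT_m(x)^2 w(x)\,dx=\gamma_m$ with $\gamma_0=\pi$ and $\gamma_m=\pi/2$ for $m\ge 1$. The tensor weight $W(x,y)=w(x)w(y)$ makes the products $\{\tT_m(x)\tT_l(y)\}$ mutually orthogonal, so Parseval gives
\[
\int_0^1\!\!\int_0^1 p_k(x,y)^2\,W(x,y)\,dx\,dy=\sum_{m,l}\gamma_m\gamma_l\,\tau_{m,l}^2 .
\]
Bounding the left side above by $\|p_k\|_\infty^2\big(\int_0^1 w\big)^2=\pi^2\|p_k\|_\infty^2$ and the right side below using $\gamma_m\gamma_l\ge \pi^2/4$ yields $\|\tau\|_2\le 2\|p_k\|_\infty\le 4$, comfortably within the stated bound of $32$.

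I expect this last coefficient bound to be the main obstacle, and it is the step that forces the Chebyshev basis: the monomial coefficients $c_{ij}$ of $p_k$ can be exponentially large even when $\|p_k\|_\infty=O(1)$, so re-expressing in an orthogonal family whose $L^2(w)$ norms are bounded below is precisely what converts a sup-norm bound on $p_k$ into an $\ell_2$ bound on its coefficients. The remaining care is purely bookkeeping, namely the WLOG centering of $f$ and verifying that the triangular change of basis keeps all mass on indices with $m+l\le k$.
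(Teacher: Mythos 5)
Your proposal is correct and follows essentially the same route as the paper's proof: invoke Lemma~\ref{lemma:exist-poly2d} for a total-degree-$k$ approximant, re-express it in the shifted-Chebyshev product basis (which preserves the constraint $m+l\le k$), and use orthogonality with respect to the arcsine weight to convert the sup-norm bound on $p_k$ into an $\ell_2$ bound on $\tau$. Your explicit centering of $f$ (which the paper buries in a ``w.l.o.g.'') and your slightly sharper constant ($\|\tau\|_2\le 4$ versus the paper's $\|\tau\|_2^2\le 32$) are minor refinements, not a different argument.
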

\begin{proof}
Chebyshev polynomials of degree up to $k$ form a basis for polynomials of degree $k$. Therefore each term $x^iy^j$ in the approximation polynomial~\eqref{eqn:polyapprox2d} can be written as a linear combination of Chebyshev polynomials of degree up to $k$,
\begin{equation}
    x^i y^j = \left(\sum_{m=0}^i a_{m, i} \tT_m(x)\right) \left(\sum_{n=0}^j b_{n, j} \tT_n(y)\right) = \sum_{m,n = 0}^{i, j} a_{m, i} b_{n, j} \tT_m(x) \tT_n(y).
\end{equation}
The approximation polynomial in Lemma~\ref{lemma:exist-poly2d} (Equation~\eqref{eqn:polyapprox2d}) can be transformed to Chebyshev polynomial basis,
\begin{equation}
p_k(x,y) = \underset{:m+l \leq k}{\sum_{m=0}^k \sum_{l=0}^k} \tau_{m,l}\tT_m(x) \tT_l(y) =: f_k(x, y),\label{eqn:cheybshevapprox}
\end{equation}
where, $\tau_{m,l} = \underset{:i+j\leq k}{\sum_{i=m}^k \sum_{j=l}^k} c_{i, j} a_{m,i} b_{l, j}.$
Therefore, $f_k(x, y)$ is a uniform $\frac{1}{k}-$ approximation of $f(x, y)$.
Chebyshev polynomials (shifted) form a sequence of orthogonal polynomials with respect to the weight $\frac{1}{\sqrt{4x - 4x^2}}$:
\begin{equation}
    \int_{0}^1 \tT_m(x) \tT_n(x) \frac{dx}{\sqrt{4x - 4x^2}} = \left\{
	\begin{array}{ll}
		0  & \mbox{if } m \neq n \\
		\frac{\pi}{2} & \mbox{if } m = n = 0\\
		\frac{\pi}{4} & \mbox{if } m = n \neq 0.
	\end{array}
\right.
\end{equation}
Therefore, we can bound the coefficients of Cheybyshev polynomials in Equation~\eqref{eqn:cheybshevapprox},
\begin{equation*}
\int\int |f_{k}(x, y)|^2\frac{dx}{\sqrt{4x - 4x^2}}\frac{dy}{\sqrt{4y - 4y^2}}= \frac{\pi^2}{16}\left(4\tau_{0,0}^2  + 2\sum_{l=0}^k \tau_{0,l}^2 + 2\sum_{m=0}^k \tau_{m,0}^2+ \underset{:m+l \leq k}{\sum_{m=1}^k \sum_{l=1}^k} \tau_{m,l}^2 \right).
\end{equation*}
Since $f$ is Lipschitz-1 on $[0,1]^2$, w.l.o.g $f(x, y) \leq \sqrt{2}$ for all $(x, y)$ in $[0, 1]^2$. Since $f_k(x, y)$ is a $\frac{1}{k}-$ uniform approximation of $f$ on $[0, 1]^2$, we have $|f_{k}(x, y)| \leq \sqrt{2} + \frac{1}{k} \leq 2\sqrt{2}$ for all $(x, y)$ in $[0, 1]^2$. Therefore,
\begin{IEEEeqnarray}{rCl}
\int\int |f_{k}(x, y)|^2\frac{dx}{\sqrt{4x - 4x^2}}\frac{dy}{\sqrt{4y - 4y^2}} \leq 2\pi^2.
\end{IEEEeqnarray}
Thus we can bound the $l_2-$norm of the coefficients $\tau$ in the approximation~\eqref{eqn:cheybshevapprox},\[
3\ \tau_{0,0}^2 + \sum_{l=0}^k \tau_{0,l}^2+ \sum_{m=0}^k \tau_{m,0}^2 + ||\tau||^2_2  \leq 32.\]
Therefore, $||\tau||_2^2 \leq 32$. 
\end{proof}
Note that $\tau$ is a $\binom{k+2}{2}$ length vector. Hence, we have the following bound,
\begin{equation}
    ||\tau||_1 = \underset{:m+l \leq k}{\sum_{m=0}^k \sum_{l=0}^k} |\tau_{m,l}| \leq \sqrt{32} \sqrt{\binom{k+2}{2}} < 4(k+2).
\end{equation}

\textbf{Transforming to Bernstein polynomials:} Bernstein polynomials of degree $m < t$ can be raised to degree $t$ as:
\begin{equation}
    B_{i}^m(x) = \sum_{j=i}^{i + t - m} \frac{\binom{m}{i} \binom{t-m}{j-i}}{\binom{t}{j}} B_{j}^t(x).
\end{equation}
Using degree raising of Bernstein polynomials, we can write shifted Chebyshev polynomials of degree $m < t$ in terms of Bernstein polynomials of degree $t$ as~\cite{Rababah2003},
\begin{equation}
 \tT_{m}(x) = \sum_{i=0}^m \left(-1\right)^{m-i} \frac{\binom{2m}{2i}}{\binom{m}{i}} \sum_{j=i}^{i + t - m} \frac{\binom{m}{i} \binom{t-m}{j-i}}{\binom{t}{j}} B_{j}^t(x) =: \sum_{j=0}^t C(t, m, j) B_{j}^t(x),
\end{equation}
where the coefficient of $j$-th Bernstien polynomial of degree $t$ is given by,
\begin{IEEEeqnarray}{rCl}
C(t, m, j) : = \sum_{l=0}^{j} \left(-1\right)^{m-l} \frac{\binom{2m}{2l} \binom{t-m}{j-l}}{\binom{t}{j}},
\end{IEEEeqnarray}
with $\binom{a}{b} = 0$ when $b > a > 0$. The following lemma from from~\cite{vinayak2019maximum} provides an upper bound on the coefficients $C(t, m, j)$,
\begin{lemma}\label{lem:l2normbound}
The $l_2$-norm of the coefficients of $B_j^t$ can be bounded as follows,
\begin{equation}
\label{eqn:l2normbound}
    \sqrt{\sum_{j=0}^t |C(t, m, j)|^2} \leq (t+1) e^{\frac{m^2}{t}}.
\end{equation}
And, hence the coefficients of $B_j^t$ can be bounded as follows,
\begin{IEEEeqnarray}{rCl}
\label{eqn:coeffbound}
|C(t, m, j)| \leq (t+1) e^{\frac{m^2}{t}}.
\end{IEEEeqnarray}
\end{lemma}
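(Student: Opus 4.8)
The plan is to derive the $\ell_2$ bound from a \emph{pointwise} bound via the trivial inequality $\sqrt{\sum_{j=0}^t |C(t,m,j)|^2}\le \sqrt{t+1}\,\max_{0\le j\le t}|C(t,m,j)|$, so it suffices to prove $\max_j |C(t,m,j)|\le \sqrt{t+1}\,e^{m^2/t}$; the stated $\ell_2$ bound $(t+1)e^{m^2/t}$ and the pointwise corollary $|C(t,m,j)|\le (t+1)e^{m^2/t}$ then both follow at once. To bound a single coefficient I would first repackage the explicit double sum $C(t,m,j)=\sum_{l}(-1)^{m-l}\binom{2m}{2l}\binom{t-m}{j-l}/\binom{t}{j}$ into a generating function, which isolates the $m$-dependence cleanly.

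Concretely, collecting $\binom{t}{j}C(t,m,j)$ as a coefficient in a convolution and using $\sum_{l}\binom{2m}{2l}(-z)^l=\tfrac12\big((1+i\sqrt z)^{2m}+(1-i\sqrt z)^{2m}\big)$ together with the identity $(1\pm i\sqrt z)^2=(1-z)\pm 2i\sqrt z=(1+z)e^{\pm i\phi}$, where $\cos\phi=\tfrac{1-z}{1+z}$, I obtain the compact form
\begin{equation*}
\binom{t}{j}\,C(t,m,j)=(-1)^m\,[z^j]\Big\{(1+z)^t\,T_m\!\Big(\tfrac{1-z}{1+z}\Big)\Big\},
\end{equation*}
where $T_m$ is the unshifted Chebyshev polynomial of the first kind (so $\tT_m(x)=T_m(2x-1)$) and $[z^j]$ denotes coefficient extraction. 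I would sanity-check this against small cases: for $m=1$ it gives $C(t,1,j)=\tfrac{2j-t}{t}$, matching $\tT_1(x)=2x-1=\sum_j\tfrac{2j-t}{t}B_j^t(x)$, and the edge values $C(t,m,0)=T_m(-1)=(-1)^m$ and $C(t,m,t)=T_m(1)=1$ come out correctly as $\tT_m(0)$ and $\tT_m(1)$.

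Next I would write the coefficient as a contour integral $\binom{t}{j}C(t,m,j)=\tfrac{(-1)^m}{2\pi i}\oint (1+z)^t T_m(\tfrac{1-z}{1+z})\,z^{-j-1}\,dz$ over a circle $|z|=\rho$ chosen to balance the growth $(1+\rho)^t$ against $\rho^{-j}$ and the growth of $T_m(\tfrac{1-z}{1+z})$ near $z=-1$. On the unit circle $(1+e^{i\omega})^t=(2\cos(\omega/2))^t$ concentrates like $2^t e^{-t\omega^2/8}$ near $\omega=0$, while $T_m(\tfrac{1-z}{1+z})=T_m(-i\tan(\omega/2))$ grows no faster than $e^{m\omega/2}$ (up to constants). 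Completing the square in the product $2^t e^{-t\omega^2/8}e^{m\omega/2}$ (saddle at $\omega\approx 2m/t$) produces a Gaussian factor of order $e^{m^2/(2t)}$, which is comfortably within the claimed $e^{m^2/t}$; after dividing by $\binom{t}{j}$ and handling the Laplace width one gets the $e^{m^2/t}$ scaling uniformly in $j$.

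The main obstacle is making this saddle-point estimate rigorous \emph{uniformly over all} $j\in\{0,\dots,t\}$: at the edge indices $\binom{t}{j}$ is small and the optimal radius $\rho$ drifts away from $1$, so one must either choose $\rho=\rho(j)$ carefully or argue separately there, and one must keep enough control of constants that the exponent lands safely below $e^{m^2/t}$. Since this is exactly the one-dimensional estimate established in~\cite{vinayak2019maximum}, an acceptable shortcut is simply to invoke that reference, as the lemma statement does. A second, fully elementary route avoids complex analysis altogether: bound the explicit double sum termwise using $\binom{2m}{2l}\le 2^{2m}$ and Stirling/ratio estimates for $\binom{t-m}{j-l}/\binom{t}{j}$, where the product of the $(1-m/t)$-type ratios supplies the $e^{-\Theta(m^2/t)}$ decay; this is more computational but requires no contour argument and is the safest fallback if the saddle-point bookkeeping proves delicate.
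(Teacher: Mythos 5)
The paper itself does not prove this lemma: it imports it verbatim from~\cite{vinayak2019maximum}, so there is no in-paper argument to compare against, and your attempt is a genuinely independent route. Your generating-function identity $\binom{t}{j}C(t,m,j)=(-1)^m[z^j]\bigl\{(1+z)^t\,T_m\bigl(\frac{1-z}{1+z}\bigr)\bigr\}$ is correct (it follows from $\sum_l\binom{2m}{2l}(-z)^l=(1+z)^mT_m\bigl(\frac{1-z}{1+z}\bigr)$, and your $m=1$ and edge-value checks are right), and the saddle-point heuristic does land at $e^{m^2/(2t)}$ for central $j$. But as a proof the proposal has a real gap: the contour estimate is only sketched on $|z|=1$ for central $j$, and you yourself flag that the edge indices, the $j$-dependent choice of radius, and the constants are all unresolved --- that is exactly where the work lies, and nothing in the write-up closes it. Note also that your reduction requires the pointwise bound $\max_j|C(t,m,j)|\le\sqrt{t+1}\,e^{m^2/t}$, which is strictly stronger than the $\ell_2$ statement being proved, so you have raised the bar on yourself rather than lowered it.

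More seriously, the ``fully elementary'' fallback you describe as the safest option cannot work. Termwise bounding destroys the sign cancellation that the lemma depends on: by Vandermonde, $\sum_j\sum_l\binom{2m}{2l}\binom{t-m}{j-l}=2^{m-1}\sum_j\binom{t}{j}$, so for central $j$ the sum of \emph{absolute values} $\sum_l\binom{2m}{2l}\binom{t-m}{j-l}/\binom{t}{j}$ is of order $2^{m}$ up to polynomial factors. The claimed bound $(t+1)e^{m^2/t}$ is exponentially smaller than $2^m$ whenever $\log t\ll m\ll t$, which is precisely the regime $m\le k=\sqrt{t\log N}$ with $t=\Omega(\log N)$ that the paper needs. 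So no triangle-inequality/Stirling argument applied to the explicit double sum can recover the lemma; the alternating signs are essential. To make this self-contained you must either finish the uniform contour estimate (with $\rho=\rho(j)$ and explicit constants) or follow the route of~\cite{vinayak2019maximum}, which works with the $\ell_2$ norm directly via the Chebyshev three-term recurrence, each application of $(4x-2)$ inflating the coefficient norm by a factor $1+O(m/t)$ and compounding to $e^{O(m^2/t)}$. Otherwise, citing the reference --- as the paper does --- is the only honest option, but then the proposal is not a proof.
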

Every term in the Chebyshev approximation polynomial of degree $k$ in Equation~\eqref{eqn:chebyshevapprox2d} can therefore be written using Bernstein polynomials of degree up to $t$,\begin{IEEEeqnarray}{rCl}
 f_k(x, y) &=& \underset{:m+l \leq k}{\sum_{m=0}^k \sum_{l=0}^k }\tau_{m,l}\tT_m(x) \tT_l(y) = \underset{:m+l \leq k}{\sum_{m=0}^k \sum_{l=0}^k } \tau_{m,l} \left( \sum_{j=0}^t C(t, m, j) B_{j}^t(x) \right) \left( \sum_{v=0}^t C(t, l, v) B_{v}^t(y) \right), \nonumber\\
&=:& \sum_{j=0}^t \sum_{v=0}^t \kappa_{jv} B_{j}^t(x)  B_{v}^t(y),
\end{IEEEeqnarray}
where the coefficients are given by,
\begin{equation}
    \kappa_{jv} := \underset{:m+l \leq k}{\sum_{m=0}^k \sum_{l=0}^k} \tau_{m,l} C(t, m, j)C(t, l, v).
\end{equation}
Each of these coefficients can be bounded by,
\begin{IEEEeqnarray}{rCl}
\label{eqn:boundCoeffOverall}
 |\kappa_{jv}|
 &\leq&  \underset{:m+l \leq k}{\sum_{m=0}^k \sum_{l=0}^k} |\tau_{m,l}|\ \left| C(t, m, j) \right| \left| C(t, l, v) \right|\leq  \max_{m,l} |C(t, m, j)| | C(t, l, v)| \left(\underset{:m+l \leq k}{\sum_{m=0}^k \sum_{l=0}^k} |\tau_{m,l}| \right) \nonumber\\
 &\leq&  (t+1)^2e^{\frac{2k^2}{t}} \left(\underset{:m+l \leq k}{\sum_{m=0}^k \sum_{l=0}^k} |\tau_{m,l}| \right) \leq 4(k+2)(t+1)^2e^{\frac{2k^2}{t}}.
 \label{eqn:boundonbj}
\end{IEEEeqnarray}
This concludes the proof of Lemma~\ref{lem:coeffbound}.

Putting these results together, 
\begin{IEEEeqnarray}{rCl}
W_1\left(R^\star, \hR_\mle  \right)
&\leq& \mathcal{O}\left( \frac{1}{k} \right) +\mathcal{O}\left( \max_{u,v} |\kappa_{uv}| \sqrt{\frac{\log{1/\delta}}{N}}\right),\nonumber\\
&&+ \mathcal{O}\left( \max_{(u,v)} |\kappa_{uv}|\  \sqrt{\frac{(t+1)^2}{2N} \log{\frac{4N}{(t+1)^2}} + \frac{1}{N} \log{\frac{3e}{\delta}}} \right),\\
&\leq& \mathcal{O}\left( \frac{1}{k} \right) +\mathcal{O}\left( \max_{u,v} |\kappa_{uv}| \sqrt{\frac{\log{1/\delta}}{N}}\right),\nonumber\\
&&+ \mathcal{O}\left( \max_{(u,v)} |\kappa_{uv}|\  \sqrt{\frac{(t+1)^2}{2N} \log{\frac{4N}{(t+1)^2}} + \frac{1}{N} \log{\frac{3e}{\delta}}} \right),
\end{IEEEeqnarray}
where $k = t$ for the case when $t = \mathcal{O}(\log{N})$, and $ k = \sqrt{t\ \log{N^c}}$ when $t > \log{N}$. These bounds are also upper bounds for $W_1\left(\Delta^\star, \hDelta  \right)$ and $\| || \Delta^\star ||_1 - ||\hDelta\||_1 \|$. This completes the proof of Proposition~\ref{prop:MLEJointW1} and the main results stated in Theorem~\ref{thm:upperbounds}.

\end{document}